\documentclass{article}

% if you need to pass options to natbib, use, e.g.:
    % \PassOptionsToPackage{numbers, compress}{natbib}
% before loading neurips_2022

% ready for submission
% \usepackage{neurips_2022}

% to compile a preprint version, e.g., for submission to arXiv, add add the
% [preprint] option:
% \usepackage[preprint]{neurips_2022}

% to compile a camera-ready version, add the [final] option, e.g.:
% \usepackage[final]{neurips_2022}

% to avoid loading the natbib package, add option nonatbib:
% \usepackage[nonatbib]{neurips_2022}

% \usepackage[nonatbib]{neurips_2022}
% \usepackage[preprint,nonatbib]{neurips_2022}
\usepackage[final,nonatbib]{neurips_2022}

\usepackage[utf8]{inputenc}         % allow utf-8 input
\usepackage[T1]{fontenc}            % use 8-bit T1 fonts
\usepackage[colorlinks,citecolor=blue,linkcolor=red,urlcolor=blue]{hyperref}
\usepackage{url}                    % simple URL typesetting
\usepackage{booktabs}               % professional-quality tables
\usepackage{amsfonts}               % blackboard math symbols
\usepackage{nicefrac}               % compact symbols for 1/2, etc.
\usepackage{microtype}              % microtypography

\usepackage[sort, numbers]{natbib}
\usepackage{xspace}
\usepackage{graphicx}
\usepackage{bm}
\usepackage{amssymb,amsmath}
\usepackage{multirow}
\usepackage{comment}
\usepackage{wrapfig}
\usepackage{lipsum}
\usepackage{caption}
\usepackage[dvipsnames]{xcolor}

%% PLOTS %%
\usepackage{pgfplots}
\usepackage{pgfplotstable}
\usepackage{pgf,tikz}
\pgfplotsset{compat=1.17}
\newcommand{\leg}[1]{\addlegendentry{#1}}
%% PLOTS %%

\def\ie{\emph{i.e.}\xspace}
\def\eg{\emph{e.g.}\xspace}
\def\wrt{\emph{w.r.t.}\xspace}

\def\etc{\emph{etc.}\xspace}
\def\vs{\emph{vs.}\xspace}

\usepackage{array}
\newcolumntype{L}[1]{>{\raggedright\let\newline\\\arraybackslash\hspace{0pt}}m{#1}}
\newcolumntype{C}[1]{>{\centering\let\newline\\\arraybackslash\hspace{0pt}}m{#1}}
\newcolumntype{R}[1]{>{\raggedleft\let\newline\\\arraybackslash\hspace{0pt}}m{#1}}

\usepackage{soul}
\usepackage{xfrac}
\usepackage{enumitem}
\definecolor{darkblue}{rgb}{0.0, 0.0, 0.55}
\definecolor{maroon}{rgb}{0.5, 0.0, 0.0}
\usepackage{url}

\usepackage{booktabs}
\usepackage{graphicx}
\usepackage{natbib}
\usepackage{amsmath}
\usepackage{booktabs}
\usepackage{algorithmic}
\usepackage{breakcites}
\usepackage{circledsteps}
\usepackage{bbm}
\usepackage{natbib}
\usepackage{bm}

\usepackage{wrapfig}
% \urlstyle{same}
% \DeclareMathAlphabet\mathbfcal{OMS}{cmsy}{b}{n}

% \usepackage{tabularx} % extra features for tabular environment
\usepackage{amsmath, amsfonts}  % improve math presentation
\usepackage{bm}
\newcommand{\bx}{{\bm x}}

\newcommand{\bw}{{\bm w}}

\newcommand{\balpha}{{\bm \alpha}}

\newcommand{\boldf}{{\bm f}}

\newcommand{\bX}{{\bf X}}

\newcommand{\bomega}{{\bm \omega}}

\newcommand{\bphi}{{\bm \phi}}

\newcommand{\bbR}{\mathbb R}

\newcommand{\bbE}{\mathbb E}

\newcommand{\cX}{\mathcal X}
\newcommand{\cH}{\mathcal H}

\newcommand{\cL}{\mathcal L}

\newcommand{\cO}{\mathcal O}

\newcommand{\cY}{\mathcal Y}

\newcommand{\fP}{\mathfrak P}

\newcommand{\fR}{\mathfrak R}

\usepackage{mathtools}

  %% Packages for Math
\usepackage{amsmath}
\usepackage{amssymb}
\usepackage{epsfig, psfrag}
\usepackage{amsthm}
\usepackage{latexsym}
\usepackage{bbm}
\usepackage{soul}

\usepackage[ruled,lined]{algorithm2e}
\SetKw{Init}{Initialize:}
\SetKw{Def}{Definition:}
\SetKw{KwInit}{Initialize:}

\SetCommentSty{mycommfont}

%% Packages for Math fonts
\usepackage{amsfonts} 
\usepackage{dsfont} 
\usepackage{euscript}

%% For better readability of math
% \allowdisplaybreaks
\usepackage{mathtools}
%\mathtoolsset{showonlyrefs=true}

%% Packages for Figures
\usepackage{epstopdf}
\usepackage{subfig}
\usepackage{graphics,color}
\usepackage{graphicx}
 \usepackage{xcolor}
 
%% Other Packages
\usepackage{lipsum}
% \usepackage{hyperref}
% \usepackage{cite}
% \usepackage{url}
% \usepackage{xspace}
% \usepackage[bookmarks]{hyperref}
% \usepackage{fancyhdr}
% %\usepackage[noend]{algpseudocode}
% \usepackage{etoolbox}

%% New environments
\newtheorem{theorem}{Theorem}
\newtheorem{lemma}{Lemma}
\newtheorem{proposition}{Proposition}

\newtheorem{assumption}{Assumption}

\theoremstyle{definition}
\newtheorem{definition} {Definition}

%% symbols
   % expectation
   % probability

\def\argmin{\mathop{\rm arg\,min}}

%% font shortcuts

%% new commands

%% Compress

\title{Neural Basis Models for Interpretability}

% The \author macro works with any number of authors. There are two commands
% used to separate the names and addresses of multiple authors: \And and \AND.
%
% Using \And between authors leaves it to LaTeX to determine where to break the
% lines. Using \AND forces a line break at that point. So, if LaTeX puts 3 of 4
% authors names on the first line, and the last on the second line, try using
% \AND instead of \And before the third author name.

\author{%
    Filip Radenovic \\
    Meta AI
    \And
    Abhimanyu Dubey \\
    Meta AI
    \And
    Dhruv Mahajan \\
    Meta AI
}

% \author{%
%   David S.~Hippocampus\thanks{Use footnote for providing further information
%     about author (webpage, alternative address)---\emph{not} for acknowledging
%     funding agencies.} \\
%   Department of Computer Science\\
%   Cranberry-Lemon University\\
%   Pittsburgh, PA 15213 \\
%   \texttt{hippo@cs.cranberry-lemon.edu} \\
%   % examples of more authors
%   % \And
%   % Coauthor \\
%   % Affiliation \\
%   % Address \\
%   % \texttt{email} \\
%   % \AND
%   % Coauthor \\
%   % Affiliation \\
%   % Address \\
%   % \texttt{email} \\
%   % \And
%   % Coauthor \\
%   % Affiliation \\
%   % Address \\
%   % \texttt{email} \\
%   % \And
%   % Coauthor \\
%   % Affiliation \\
%   % Address \\
%   % \texttt{email} \\
% }

\begin{document}

\maketitle

\begin{abstract}
    Due to the widespread use of complex machine learning models in real-world applications, it is becoming critical to explain model predictions. However, these models are typically black-box deep neural networks, explained post-hoc via methods with known faithfulness limitations. Generalized Additive Models (GAMs) are an inherently interpretable class of models that address this limitation by learning a non-linear shape function for each feature separately, followed by a linear model on top. However, these models are typically difficult to train, require numerous parameters, and are difficult to scale. 
    We propose an entirely new subfamily of GAMs that utilizes basis decomposition of shape functions. A small number of basis functions are shared among all features, and are learned jointly for a given task, thus making our model scale much better to large-scale data with high-dimensional features, especially when features are sparse. We propose an architecture denoted as the Neural Basis Model (NBM) which uses a single neural network to learn these bases. On a variety of tabular and image datasets, we demonstrate that for interpretable machine learning, NBMs are the state-of-the-art in accuracy, model size, and, throughput and can easily model all higher-order feature interactions.
    Source code is available at \href{https://github.com/facebookresearch/nbm-spam}{\ttfamily github.com/facebookresearch/nbm-spam}. 
\end{abstract}
\vspace{1em}
\section{Introduction}
\label{sec:intro}

Real world machine learning models~\cite{vddp18,c20} are mostly used as a {\em black-box}, \ie, it is very difficult to analyze and understand why a specific prediction was made.
In order to \emph{explain} such black-box models, an instance-specific local interpretable model is often learned~\cite{rsg16,ll17}.
However, these approaches tend to be unstable and unfaithful~\cite{ab18,shj+20}, \ie, they often misrepresent the model’s behavior.
On the other hand, a family of models known as generalized additive models (GAMs)~\cite{ht86} have been used for decades as an inherently interpretable alternative to black-box models.

GAMs learn a \emph{shape function} for each feature independently, and outputs of such functions are added (with a bias term) to obtain the final model prediction. All models from this family share an important trait: the impact of any specific feature on the prediction does not rely on the other features, and can be understood by visualizing its corresponding shape function.
Original GAMs~\cite{ht86} were fitted using splines, which have since been improved in explainable boosting machines (EBMs)~\cite{lcg12} by fitting boosted decision trees, or very recently in neural additive models (NAMs)~\cite{amf+21} by fitting deep neural networks (DNNs).
A drawback for all the aforementioned approaches is that for each shape function, they require either millions of decision trees~\cite{lcg12}, or a DNN with tens of thousands of parameters~\cite{amf+21}, making them prohibitively expensive for learning datasets with a large number of features.

In this work, we propose a novel subfamily of GAMs, which, unlike previous approaches, learn to decompose each feature's shape function into a small set of basis functions \emph{shared} across all features.
The shape functions are fitted as the feature-specific linear combination of these shared bases, see Figure~\ref{fig:architecture}.
At an abstract level, our approach is motivated by signal decomposition using traditional basis functions like the Fourier basis~\cite{b86} or Legendre polynomials~\cite{olbc10}, where a weighted combination of a few basis functions suffice to reconstruct complex signal shapes. However, in contrast to these approaches, our basis decomposition is not fixed {\em a priori}. In fact, it is learnt specifically for the prediction task.
Consequently, we maintain the most important feature of GAMs, \ie, their interpretability, as the contribution of single feature does not depend on the other features. 
At the same time, we gain scalability, as the number of basis functions needed in practice is much smaller than the number of input features.
Moreover, we show that the usage of basis functions can increase computational efficiency by several orders of magnitude when the input features are sparse.
Additionally, we propose an approach to learning the basis functions using a single DNN. 
We call this solution the Neural Basis Model (NBM).
Using neural networks allows for even higher scalability, as training and inference are performed on GPUs or other specialized hardware, and can be easily implemented in any deep learning framework using standard, already developed, building blocks.

Our contributions are as follows:
(i) We propose a novel subfamily of GAMs whose shape functions are composed of shared basis functions, and propose an approach to learn basis functions via DNNs, denoted as Neural Basis Model (NBM). 
This architecture is suitable for mini-batch gradient descent training on GPUs and easy to plug-in into any deep learning framework.
(ii) We demonstrate that NBMs can be easily extended to incorporate pairwise functions, similar to GA$^2$Ms~\cite{lcgh13}, by learning another set of bases to model the higher order interactions. This approach effectively only linearly increases the parameters, while other models such as EB$^2$Ms~\cite{lcg12,lcgh13} and NA$^2$Ms~\cite{amf+21} suffer from quadratic growth of parameters, and often require heuristics and repeated training to select the most important interactions before learning~\cite{lcgh13}.
(iii) Through extensive evaluation of regression, binary classification, and multi-class classification, with both tabular and computer vision datasets, we show that NBMs and NB$^2$Ms outperform state-of-the-art GAMs and GA$^2$Ms, while scaling significantly better, \ie, fitting much fewer parameters and having higher throughput. For datasets with more than ten features, using NBMs result in around 5$\times$--50$\times$ reduction in parameters over NAMs~\cite{amf+21}, and 4$\times$--7$\times$ better throughput.
(iv) We propose an efficient extension of NBMs to sparse datasets with more than a hundred thousand features, where other GAMs do not scale at all.

\begin{figure}
    \centering
    \includegraphics[width=1.0\textwidth]{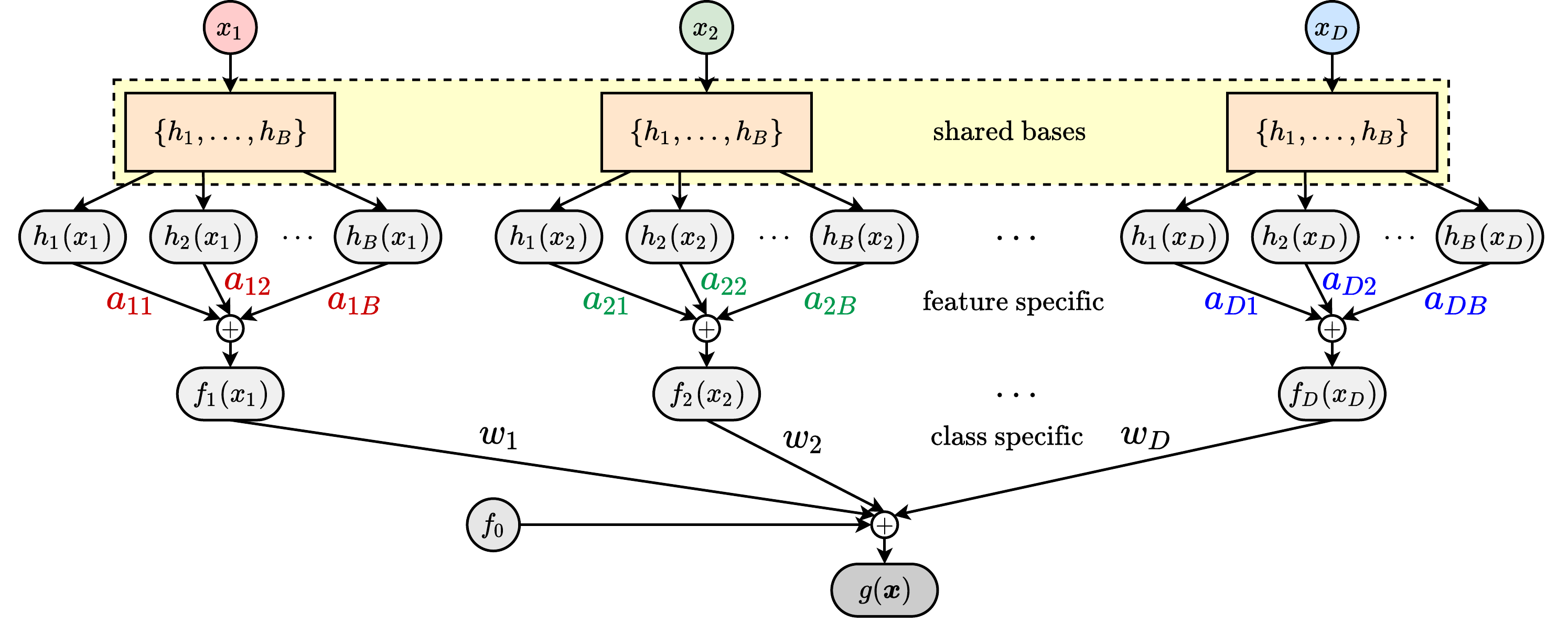}
    \caption{
        \label{fig:architecture}
        Neural Basis Model (NBM) architecture for a binary classification task.
    }
\end{figure}
\section{Related work}
\label{sec:related}

Shape functions in GAMs~\cite{ht86,w06} have many different representations in the literature, including: splines~\cite{ht86}, trees or random forests~\cite{lcg12}, deep neural networks~\cite{amf+21}, neural oblivious decision trees~\cite{ccg22}.
Popular methods of fitting GAMs are: backfitting~\cite{ht86}, gradient boosting~\cite{lcg12}, or mini-batch gradient descent~\cite{amf+21,ccg22}.
Our work falls under the GAM umbrella, however, it differs from these approaches by not learning shape functions independently, but rather, learning a set of shared basis functions that are used to compose each shape function. The bases themselves can be complex non-linear functions that operate on one feature at a time, \eg, decision trees or splines, however, to keep the scope of the work concise and to ensure straightforward scalability, we represent them with deep neural networks and use mini-batch stochastic gradient descent to learn the bases. This makes our work most closely related to neural additive models (NAMs)~\cite{amf+21}, however, crucially, our method learns far fewer parameters by sharing bases compared to NAMs.

Methods have been proposed to model pairwise interactions~\cite{lcgh13,tlp+18,ccg22}, and they are commonly denoted as GA$^2$Ms.
However, they usually require sophisticated feature selection using: heuristics~\cite{lcgh13}, back-propagation~\cite{ccg22}, or, two-stage iterative training~\cite{tlp+18}.
We note that one of the main goals of our work is to analyze the scalability of the newly proposed NBM architecture in extreme scenarios, hence, we do not apply feature selection algorithms.
That being said, before-mentioned algorithms are complementary and can be applied to NB$^2$Ms as well.

Finally, GAMs are a popular choice in high-risk applications for healthcare~\cite{tfk+13,clg+15}, finance~\cite{b07}, forensics~\cite{tf08}, \etc
Another line of work applicable to these domains are interpretable surrogate models such as LIME~\cite{rsg16}, SHAP~\cite{ll17}, tree-based surrogates~\cite{tksr16,bkb17}, that give a \emph{post-hoc} explanation of a high-complexity model.
However, there are almost no theoretical guarantees that the simple surrogate model is highly representative of the more complex model~\cite{ab18}.
This issue is completely resolved when using inherently transparent models such as NBMs.
We hope that our approach sparks wider usage of GAMs in mission-critical applications with large-scale data, where the ability to interpret, understand, and correct the model is of utmost importance.
\section{Method}
\label{sec:method}
\subsection{Background}
\label{sec:background}

\paragraph{Generalized Additive Model (GAM)~\cite{ht86}.}
Given a $D$-dimensional interpretable input $\bm{x}{=}\{x_i\}_{i=1}^D$, $\bm{x} \in \mathbb{R}^D$, a target label $y$, a link function $g$ (\eg, logistic function), a univariate shape function $f_i$, corresponding to the input feature $x_i$, a bivariate shape function $f_{ij}$, corresponding to the feature interaction, and a bias term $f_0$, the prediction function in GAM and GA$^2$M is expressed as
\begin{equation}
    \label{eq:gam}
    \textbf{GAM}: g(\bm{x}) = f_0 + \sum_{i=1}^{D}f_i(x_i); \hspace{5mm} \textbf{GA$^2$M}: g(\bm{x}) = f_0 + \sum_{i=1}^{D}f_i(x_i) + \sum_{i=1}^{D}\sum_{j>i}^{D}f_{ij}(x_i, x_j).
\end{equation}
Interpreting GAMs is straightforward as the impact of a feature on the prediction does not rely on the other features, and can be understood by visualizing its corresponding shape function, \eg, plotting $x_i$ on the $x$-axis and $f_i(x_i)$ on the $y$-axis.
A certain level of interpretability is sacrificed for accuracy when modeling interactions, as $f_{ij}$ shape functions are harder to visualize.
Shape function visualization through heatmaps~\cite{lcgh13,ccg22} is commonly used towards that purpose.
Note that, the graphs visualizations of GAMs are an exact description of how GAMs compute a prediction.

\subsection{Our model architecture}
\label{sec:architecture}

We observe that, typically, input features of high-dimensional data are correlated with each other. As a result, it should be possible to decompose each shape function $f_i$ into a small set of basis functions shared among all the features. This is the core idea behind our approach.
\paragraph{Neural Basis Model (NBM).}
We propose to represent shape functions $f_i$ as
\begin{equation}
    \label{eq:nbm}
    f_i(x_i) = \sum_{k=1}^{B} h_k(x_i) a_{ik};
\end{equation}
where $\{h_1, h_2, ..., h_B\}$ represents a set of $B$ shared basis functions that are independent of feature indices, and coefficients $a_{ik}$ are the projection of each feature to the shared bases.
We additionally propose to learn basis functions using a DNN, \ie, a single \emph{one}-input $B$-output multi-layer perceptron~(MLP) for all $\{h_k; k = 1,\dotsc,B\}$. The resulting architecture is shown in Figure~\ref{fig:architecture}.

\paragraph{Multi-class / multi-task architecture.}
Let $l$ correspond to the target class $y_l$ in the multi-class setting. Similar to Equation~\ref{eq:gam}, the prediction function $g_l$ for class $y_l$ in GAMs can be written as:
\begin{equation}
    \label{eq:multi_class}
    g_l(\bm{x}) = f_{0l} + \sum_{i=1}^{D}f_i(x_i)w_{il},
\end{equation}
where feature shape functions $f_i(x_i)$ are shared among the classes and are linearly combined using class specific weights $w_{il}$. Combining Equations~\ref{eq:nbm} and~\ref{eq:multi_class}, multi-class NBM can be represented as:
\begin{equation}
    \label{eq:multi_class_nbm}
    \textbf{Multi-class NBM}: g_l(\bm{x}) = f_{0l} + \sum_{i=1}^{D}\sum_{k=1}^{B} h_k(x_i) a_{ik} w_{il}.
\end{equation}

\paragraph{Extension to NB$^2$M.}
Similar to NBM, we represent GA$^2$M shape functions $f_{ij}$ in Equation~\ref{eq:gam} as:
\begin{equation}
    \label{eq:nb2m}
    f_{ij}(x_i, x_j) = \sum_{k=1}^{B} u_k(x_i, x_j) b_{ijk};
\end{equation}
where $\{u_1, u_2, ..., u_B\}$ represents a set of $B$ shared bi-variate basis functions that are independent of feature indices and coefficients $b_{ijk}$ are the projection of pair-wise features to the shared bases. We learn an additional \emph{two}-input $B$-output MLP for all $\{u_k; k=1,\dotsc,B\}$ to learn the bases. Extension to multi-class setting can be done in the same way as for NBMs.

\paragraph{Sparse architecture.}
Typically, datasets with high-dimensional features are sparse in nature. For example, in the Newsgroups dataset~\cite{l95}, news articles are represented by \emph{tf-idf} features, and, for a given instance, most of the features are absent due to the vocabulary being of the order of 100K words. Since NBM uses a single DNN to learn all the bases, we can simply append the single value representing the absent feature to the batch, to compute the corresponding basis function values.  The subsequent linear projection to feature indices via $a_{ik}$ is a computationally inexpensive operation.

In contrast, typical GAMs (\eg, Neural Additive Model (NAM)~\cite{nam}) need to pass the absent value through every shape function $f_i$ which makes it compute-intensive as well as difficult to implement.

\paragraph{Training and regularization.}
We use mean squared error (MSE) for regression, and cross-entropy loss for classification.
To avoid overfitting, we use the following regularization techniques: (i) $L_2$-normalization (weight decay)~\cite{kh91} of parameters; (ii) batch-norm~\cite{is15} and dropout~\cite{shk+14} on hidden layers of the basis functions network; (iii) an $L_2$-normalization penalty on the outputs $f_i$ to incentivize fewer strong feature contributions, as done in~\cite{amf+21}; (iv) basis dropout to randomly drop individual basis functions in order to decorrelate them. Similar techniques have been used for other GAMs~\cite{amf+21,ccg22}.

\paragraph{Selecting the number of bases.} 
One can use the theory of Reproducing Hilbert Kernel Spaces (RKHS,~\citep{berlinet2011reproducing}) to devise a heuristic for selecting the number of bases $B$. Specifically, we demonstrate that any NBM model lies on a subspace within the space spanned by a complete GAM if the GAM shape functions reside within a ball in an RKHS. Assuming a regularity property in the data distribution, one can then demonstrate that $B = \cO(\log D)$ bases are sufficient to obtain competitive performance. We present this formally in Appendix Sec.~\ref{sec:appendix_full_proofs}. This provides the alternate interpretation of NBM as learning a ``principal components'' decomposition in the $L_2-$space of functions, as we learn a set of (preferably orthogonal) basis functions to approximate the decision boundary. 

\subsection{Discussion}
\label{sec:discussion}

In this section, we contrast NBMs with closely related GAMs: Neural Additive Models (NAMs)~\cite{amf+21}. 

\paragraph{Neural Additive Model (NAM)~\cite{amf+21}.}
NAMs learn a linear combination of networks that each attend to a single input feature: each $f_i$ in (\ref{eq:gam}) is parametrized by a deep neural network, \ie, a \emph{one}-input \emph{one}-output multi-layer perceptron (MLP). 
These MLPs are trained jointly using backpropagation and can learn arbitrarily complex shape functions.

\paragraph{Number of parameters.}
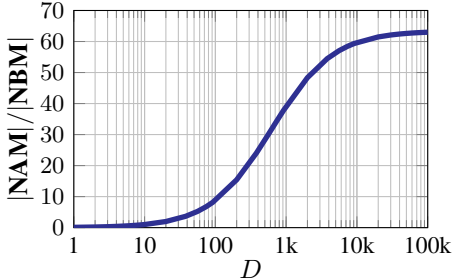
\begin{wrapfigure}{R}{0.5\textwidth}
    \centering
    \vspace{-1em}
    \begin{tikzpicture}
        \begin{axis}[%
            width=0.45\textwidth,
            height=0.32\textwidth,
            ylabel={$|\textbf{NAM}| / |\textbf{NBM}|$},
            xlabel={$D$},
            xmode=log,
            tick label style  = {font=\small},
            % legend pos=south east,
            % legend style={cells={anchor=east}, font =\tiny, fill opacity=0.8, row sep=-2.5pt},
            xmin = 1,
            xmax = 100000,
            ymin = 0,
            ymax = 70,
            xtick={0, 1, 10, 100, 1000, 10000, 100000},
            xticklabels={1, 10, 100, 1k, 10k, 100k},
            ytick={0, 10, 20, 30, 40, 50, 60, 70},
            log ticks with fixed point,
            grid=both,
            ylabel style={yshift=-1ex},      
            xlabel style={yshift=1ex},
            samples at={1,2,...,10,20,...,100,200,...,1000,2000,...,10000,20000,30000,40000,50000,60000,70000,80000,90000,100000}
        ]
        \addplot [Blue, solid, line width=2.0, no marks] {64.02 / (628.20 / x + 1.01};
        % \addplot [black, dashed, line width=1.0, no marks] {1} node[pos=0.3,above] {\scriptsize{$|\textbf{NAM}| = |\textbf{NBM}|$}};
        \end{axis}
    \end{tikzpicture}
    \vspace{-0.5em}
    \caption{
        \label{fig:params}
        NAM \vs NBM: \#parameters.
    }
\end{wrapfigure}
We compare number of weight parameters needed to learn NAM \vs NBM for the binary-classification task. See Appendix Section~\ref{sec:additional_discussion_wrt_nam} for multi-class analysis.
Let us denote with $M$ the number of parameters in MLP for each feature in NAM, and with $N$ the number of parameters in MLP for bases in NBM.
In most experiments the optimal NAM has 3 hidden layers with 64, 64 and 32 units ($M=6401$), and, NBM has 3 hidden layers with 256, 128, 128 units ($N=62820$) and $B=100$ basis functions.
Then the ratio of number of parameters in NAM \vs NBM is given by,
\begin{equation}
    \label{eq:params}
    \frac{|\textbf{NAM}|}{|\textbf{NBM}|} = \frac{D \cdot M + D}{N + D \cdot B + D} = \frac{6402}{\frac{62820}{D} + 101}.
\end{equation}
Figure~\ref{fig:params} shows this ratio for different values of feature dimensionality $D$. For $D=10$, NBMs and NAMs have roughly equal number of parameters.  For most textual and vision datasets, feature dimensionality is significantly higher, thus leading to 10$\times$--50$\times$ reduction in parameters. We also observe that as a result, for many datasets, specialized regularization discussed in the previous section gives incremental gains for NBMs, while they are very crucial for NAMs to give good performance. Additional analysis on number of parameters is given in Section~\ref{sec:comparison} and Table~\ref{tab:params_and_throughput}.

\paragraph{Throughput.} One of the main challenges in approach like NAMs is the low throughput rate, that is the number of data instances processed per second, which directly affects the training speed. Since NAMs use separate MLP per dimension, it is much more challenging to implement efficiently. NBMs on the other hand are much more efficient since feature specific linear layer on the top of bases is very fast.
For example, for datasets with around hundred features, the original NAM implementation~\cite{amf+21} is around 20$\times$ slower in training compared to NBMs. We optimized the speed of NAMs using group convolutions (see Appendix Section~\ref{sec:additional_discussion_wrt_nam}), but even this optimized version is around 5$\times$ slower.

\paragraph{Stability.}
The interpretability of models and their explanations is tightly coupled to their stability.
For example, post-hoc explanations of black-box models are known to be unstable, and produce different explanations for small input changes~\cite{shs+21}.
GAMs are exactly explained by visualizing shape functions that represent the model, however, a desirable property is to have similar shape functions when repeatedly training the model on the same data while varying random initialization.
Because of the over-parameterization in NAMs, we observe that shape functions of different runs are often unstable (\ie they often diverge), especially for feature regions where the data density is low.
On the other hand, NBMs train a single set of shared bases for the entire dataset, which makes shape functions significantly more stable, see Section~\ref{sec:visualizations} and Figure~\ref{fig:shape_functions}.

\paragraph{NA$^2$M vs. NB$^2$M.} NAMs can trivially be extended to NA$^2$Ms by learning additional \emph{two}-input \emph{one}-output MLPs for each pairwise feature interaction $f_{ij}$. Since the numbers of parameters grow quadratically, this setting further exaggerates the parameters and throughput discrepancies. In fact, as we show in Section~\ref{sec:comparison}, for high dimensional datasets the NA$^2$M approach does not scale at all.
\section{Experiments}
\label{sec:experiments}

\subsection{Datasets}
\label{sec:datasets}

\paragraph{Tabular datasets.}
We report performance on \textbf{CA Housing}~\cite{pb97,ca_housing}, \textbf{FICO}~\cite{fico}, \textbf{CoverType}~\cite{bd99,covtype,uci}, and \textbf{Newsgroups}~\cite{l95,newsgroups} tabular datasets.
We perform one-hot encoding for categorical features, and min-max scaling of features to $[0, 1]$ range.
Data is split to have $70/10/20$ ratio for training, validation, and, testing, respectively; except for Newsgroups where test split is fixed, so we only split the train part to $85/15$ ratio for train and validation.

We also report performance on \textbf{MIMIC-II}~\cite{svr+11,mimic_ii}, \textbf{Credit}~\cite{d15,credit}, \textbf{Click}~\cite{click}, \textbf{Epsilon}~\cite{epsilon}, \textbf{Higgs}~\cite{bsw14,higgs}, \textbf{Microsoft}~\cite{ql13,microsoft}, \textbf{Yahoo}~\cite{yahoo}, and \textbf{Year}~\cite{year} tabular datasets.
For these datasets, we follow~\cite{pmb20,ccg22} to use the same training, validation, and, testing splits, and to perform the same feature normalization: target encoding for categorical features, quantile transformation with 2000 bins for all features to Gaussian distribution.

Additional details for all tabular datasets are given in Table~\ref{tab:datasets} and Appendix Section~\ref{sec:dataset_descriptions}.

\paragraph{Image datasets (classification).}
We experiment with two bird classification datasets: \textbf{CUB}~\cite{cub,wbw+11} and \textbf{iNaturalist Birds}~\cite{inat,vcb21}.
CUB images are annotated with keypoint locations of 15 bird parts, and each location is associated with one or more part-attribute labels. 
iNaturalist Birds contains significantly more bird classes and more challenging scenes compared to CUB, but lacks keypoint annotations.
We perform the concept-bottleneck-style~\cite{knt+20} interpretable pre-processing:
(i) Images are randomly cropped and resized to 448$\times$448 size, and passed through a ResNet50 model~\cite{hzrs16} until the last pooling layer to extract 2048-D features on the 14$\times$14 spatial grid.
(ii) Part-attribute linear classifiers are trained on the extracted features using part locations and part-attribute labels from CUB training split.
(iii) Max-pooling of part-attribute scores over spatial locations is performed to extract 278 interpretable features (\eg, orange legs, striped wings, needle-shaped beak) for each image in both CUB and iNaturalist, using the part-attribute classifiers trained on CUB only.
Splits are preset and results are reported on the validation split, which is a common practice in the computer vision community.
Additional details are given in Table~\ref{tab:datasets} and Appendix Section~\ref{sec:dataset_descriptions}.

\paragraph{Image dataset (object detection).}
For this task we use a proprietary object detection dataset, denoted as \textbf{Common Objects}, that contains 114 common objects, (plus a background class) with bounding box locations, 200 parts and 54 attributes.
Each box for each image is pre-processed using compositions of parts and attributes to extract 2,618 interpretable features and 100k pairwise feature interactions.
For the purpose of evaluating models in this paper, one data instance in this dataset is equivalent to one bounding box.
As with previous computer vision datasets, results are reported on the validation split.
Additional details are given in Table~\ref{tab:datasets} and Appendix Section~\ref{sec:dataset_descriptions}.

\begin{table}[t]
    \caption{
        \label{tab:datasets}
        Datasets overview.
    }
    \centering
    \small
    \def\arraystretch{0.9} % 1 is the default, row height
    \def\cw{1cm}
    \begin{tabular}{L{2.2cm}L{1.8cm}R{1.27cm}R{1.27cm}R{1.03cm}R{1.0cm}R{1.03cm}R{0.88cm}}
    \toprule
    \textbf{\hspace{-2mm}Dataset} & \textbf{Task} & \textbf{\#Train} & \textbf{\#Val} & \textbf{\#Test} & \textbf{Sparse} & \textbf{\#Feat} & \textbf{\#Class} \\
    \midrule
    \midrule
    \multicolumn{2}{l}{\textbf{\hspace{-2mm}Tabular dataset}}\\
    \midrule
    CA Housing & Regression & 14,447 & 2,065 & 4,128 & No & 8 & -- \\
    FICO & Binary & 7,321 & 1,046 & 2,092 & No & 39 & 2 \\
    CoverType & Multi-class & 406,707 & 58,102 & 116,203 & No & 54 & 7 \\
    Newsgroups & Multi-class & 9,899 & 1,415 & 7,532 & 99.9\% & 146,016 & 20 \\
    \midrule
    MIMIC-II & Binary & 17,155 & 2,451 & 4,902 & No & 17 & 2 \\
    Credit & Binary & 199,364 & 28,481 & 56,962 & No & 30 & 2 \\
    Click & Binary & 800,000 & 100,000 & 100,000 & No & 11 & 2 \\
    Epsilon & Binary & 320,000 & 80,000 & 100,000 & No & 2,000 & 2 \\
    Higgs & Binary & 8,400,000 & 2,100,000 & 500,000 & No & 28 & 2 \\
    Microsoft & Regression & 580,539 & 142,873 & 241,521 & No & 136 & -- \\
    Yahoo & Regression & 473,134 & 71,083 & 165,660 & No & 699 & -- \\
    Year & Regression & 370,972 & 92,743 & 51,630 & No & 90 & -- \\
    \midrule
    \midrule
    \multicolumn{2}{l}{\textbf{\hspace{-2mm}Image dataset}}\\
    \midrule
    CUB & Classification & 5,994 & 5,794 & -- & No & 278 & 200 \\
    iNaturalist Birds & Classification & 414,847 & 14,860 & -- & No & 278 & 1,486 \\
    Common Objects & Detection & 2,645,488 & 58,525 & -- & 97.0\% & 2,618 & 115 \\
    \bottomrule
    \end{tabular}
\end{table}

\subsection{Baselines}
\label{sec:baselines}

We implement the following baselines in PyTorch~\cite{pytorch}, and train using mini-batch gradient descent:

\paragraph{Linear.}
Linear / logistic regression are interpretable models that make a prediction decision based on the value of a linear combination of the features. 

\paragraph{NAM~\cite{amf+21}.}
We experiment with two proposed NAM architectures~\cite{amf+21}: (i) MLPs containing 3 hidden layers with 64, 64 and 32 units and ReLU~\cite{gbb11} activation, and (ii) single hidden layer MLPs with 1,024 ExU units and ReLU-1 activation.
We reimplement the original NAM implementation~\cite{nam} (details in Appendix Section~\ref{sec:additional_discussion_wrt_nam}) achieving around $\times$2--$\times$10 speedup at training, depending on the dataset.
Finally, we extend the implementation to NA$^2$Ms, as well.
We released our NAM implementation together with the rest of our code at \href{https://github.com/facebookresearch/nbm-spam}{\ttfamily github.com/facebookresearch/nbm-spam}.

\paragraph{MLP.}
Multi-layer perceptron (MLP) is a non-interpretable black-box model capturing high-order interaction between the features: $g(\bm{x}){=}f(x_1,x_2,\ldots,x_D)$.
For most datasets, MLP sets the upper bound on the performance, and gives an idea of the trade-off between accuracy and interpretability.
We experiment with the following architectures: (i) 5 hidden layers with 128, 128, 64, 64, 64 units; (ii) 3 hidden layers with 1024, 512, 512 units; (iii) 2 hidden layers with 2048, 1024 units.
We have observed that increasing the depth by adding more layers for any of the three architectures has no additional accuracy gain. For all datasets, we report the best performing architecture across the three.
For the following baselines, we use the available implementations:
\paragraph{EBM~\cite{lcg12,lcgh13}.}
Explainable Boosting Machines (EBMs) are another state-of-the-art GAM which use gradient boosting of millions of shallow trees to learn a shape function for each input feature.
These models support automatic pairwise interactions through their EB$^2$M implementation, but only for regression and binary classification tasks.
We use the \texttt{interpretml} library~\cite{njkc19}.

\paragraph{XGBoost~\cite{cg16}.}
EXtreme Gradient Boosted trees (XGBoost) are another non-interpretable black-box model that learn high-order feature interactions. We use the \texttt{xgboost} library~\cite{cg16}.

\subsection{Implementation details}
\label{sec:implementation}

\paragraph{NBM.}
We use the following architecture for NBMs and NB$^2$Ms: MLP containing 3 hidden layers with 256, 128, and 128 units, ReLU~\cite{gbb11}, $B=100$ basis outputs for NBMs and $B=200$ for NB$^2$Ms.
Source code is available at \href{https://github.com/facebookresearch/nbm-spam}{\ttfamily github.com/facebookresearch/nbm-spam}. 

\paragraph{Training details.}
Linear, NAM, NBM, and MLP models are trained using the Adam with decoupled weight decay (AdamW) optimizer~\cite{lh19}, on 8$\times$V100 GPU machines with 32 GB memory, and a batch size of at most 1024 per GPU (divided by 2 every time a batch cannot fit in the memory).
We train for 1,000, 500, 100, or, 50 epochs, depending on the size and feature dimensionality of the dataset.
The learning rate is decayed with cosine annealing~\cite{lh16} from the starting value until zero.
For NBMs on all datasets, we tune the starting learning rate in the continuous interval $[1\mathrm{e}{-}5, 1.0)$, weight decay in the interval $[1\mathrm{e}{-}10, 1.0)$, output penalty coefficient in the interval $[1\mathrm{e}{-}7, 100)$, dropout and basis dropout coefficients in the discrete set $\{0, 0.05, 0.1, 0.2, 0.3, 0.4, 0.5, 0.6, 0.7, 0.8, 0.9\}$.
We find optimal hyper-parameters using validation set and random search.
Similar hyper-parameter search is performed for Linear, NAM, and MLP baselines.
Finally, for EBMs and XGBoost, CPU machines are used, with hyper-parameter search using the guidelines in the original works.
See Appendix Section~\ref{sec:hyper_parameters} for more training and hyper-parameter search details.

\paragraph{Evaluation details.}
Performance metrics: (i) for regression we report mean-squared error (MSE) or root MSE (RMSE); (ii) for binary classification we report area under the ROC curve (AUROC) or error rate (Error); (iii) for multi-class classification in tabular and image domains we report accuracy@1 (acc@1); and, (iv) for object detection in image domain we report mean average precision (mAP) averaged over IoU thresholds from 0.5 to 0.9 as per MS-COCO~\cite{lmb+14} definitions.
We report average performance and standard deviation over 10 runs with different random seeds.

\subsection{Comparison with baselines}
\label{sec:comparison}
In this section we compare NBM, as well as the pairwise interactions NB$^2$M counterpart, with popular and widely used GAMs and non-interpretable black-box models.

First, we perform extensive comparison on the number of parameters and throughput against the most similar architecture, \ie Neural Additive Model (NAM)~\cite{amf+21}.
Both NAMs and NBMs utilize deep networks and are able to run on GPUs, which helps to scale the models on large datasets.
Results on representative datasets are presented in Table~\ref{tab:params_and_throughput}.
The throughput is measured as the number of input instances that we can process per second ($\bm{x}$ / sec) on one 32 GB V100 GPU, in inference mode.
For each model, we take the largest batch size (up to 8,192) that fits on the GPU and calculate the average time over 100 runs to process that batch.
With that, we calculate the number of input instances processed per second.
Throughput can vary depending on the implementation, library used, \etc
Hence, in order to be as fair as possible, we compare both models with our own optimized implementation in the same deep learning library, \ie, PyTorch~\cite{pytorch}.
The only dataset where NAM narrowly beats NBM in the number of parameters is CA Housing, which has the smallest number of input features $D=8$.
On other datasets, NBM has around 5$\times$--50$\times$ fewer parameters than NAM, while having 4$\times$--7$\times$ smaller runtime.
Finally, only the sparse version of NBM model can efficiently run on Newsgroups and Common Objects (with pairwise interactions) datasets, where standard NAMs (and other GAMs) have throughput too low for any practical application. 
Note that these comparisons are with our optimized implementation of NAM. 

\begin{table}[t]
    \caption{
        Number of parameters (\#par.) and throughput as inputs per second ($\bm{x}$/sec).
        Here NAM and NA$^2$M refers to our optimized implementation;
        $^{\dagger}$refers to sparse NBM optimization.
    }
    \label{tab:params_and_throughput}
    \setlength{\tabcolsep}{0.0mm}  % col space separator
    \def\cw{1.0cm}
    \def\cww{0.55cm}
    \def\hs{1.21mm}
    \centering
    \footnotesize{
        \begin{tabular}{L{1.0cm}C{\cw}R{\cw}L{\cww}C{\cw}R{\cw}L{\cww}C{\cw}R{\cw}L{\cww}C{\cw}R{\cw}L{\cww}C{\cw}R{\cw}L{\cww}}
            \toprule
            \multirow{2}{*}{\textbf{Model}} & \multicolumn{3}{c}{\textbf{CA Housing}} & \multicolumn{3}{c}{\textbf{FICO}} & \multicolumn{3}{c}{\textbf{CoverType}} & \multicolumn{3}{c}{\textbf{Newsgroups}} & \multicolumn{3}{c}{\textbf{iNat. Birds}} \\
            \cmidrule(l{\hs}r{\hs}){2-4}\cmidrule(l{\hs}r{\hs}){5-7}\cmidrule(l{\hs}r{\hs}){8-10}\cmidrule(l{\hs}r{\hs}){11-13}\cmidrule(l{\hs}r{\hs}){14-16}
            & \#par. & \multicolumn{2}{c}{$\bm{x}$/sec} & \#par. & \multicolumn{2}{c}{$\bm{x}$/sec} & \#par. & \multicolumn{2}{c}{$\bm{x}$/sec} & \#par. & \multicolumn{2}{c}{$\bm{x}$/sec} & \#par. & \multicolumn{2}{c}{$\bm{x}$/sec} \\
            \midrule\midrule
            NAM & 54K & 0.5M &&  262K & 123K &&  363K & ~~80K &&  984M &           ~~23 && 2.3M & 15K & \\
            NBM & 65K & 3.4M & \tiny{\textbf{\textcolor{Green}{$\times$6.8}}} & ~~68K & 821K & \tiny{\textbf{\textcolor{Green}{$\times$6.7}}} & ~~70K &  530K & \tiny{\textbf{\textcolor{Green}{$\times$6.6}}} & ~~18M & $^{\dagger}$9K & \tiny{\textbf{\textcolor{Green}{$\times$391}}} & 0.5M & 74K & \tiny{\textbf{\textcolor{Green}{$\times$4.9}}} \\
            \midrule\midrule
            NA$^2$M   & 243K & 119K && 5.3M & ~~6K &&   10M & ~~3K && -- & -- &&  320M & ~~99  & \\
            NB$^2$M   & 161K & 641K & \tiny{\textbf{\textcolor{Green}{$\times$5.4}}} & 0.3M &  30K & \tiny{\textbf{\textcolor{Green}{$\times$5.0}}} &  0.5M &  15K & \tiny{\textbf{\textcolor{Green}{$\times$5.0}}} &	-- & -- && ~~66M &  374 & \tiny{\textbf{\textcolor{Green}{$\times$3.8}}} \\
            \bottomrule
        \end{tabular}
    }
\end{table}

Next, we compare performance with GAM baselines, and non-interpretable black-box models in Table~\ref{tab:comparison}.
We notice that NBMs achieve the best performance among all GAMs, even outperforming full complexity MLPs on several datasets.
The difference in performance is more pronounced on larger datasets, such as iNaturalist and Common Objects, where NBMs generalize better.
EBMs have a big downside as they do not support pairwise interactions on multi-class tasks, and do not scale at all to very large datasets.
Finally, on datasets with large number of features, \ie Newsgroups and Common Objects (with pairwise interactions), NBMs are the only GAMs that scale, as we did not manage to complete a successful training with NAMs or EBMs.

We finally observe that pairwise interactions are enough to match or beat black-box models on 6 out of 7 datasets.
The only exception is CoverType, where MLP has 0.9694 acc@1 \vs NB$^2$M with 0.8908.
However, we scale NBMs further and train NB$^3$M (\ie, including triplet interactions) that gets 0.9634 acc@1.
Even though this is an interesting result that helps us understand the data behavior, we argue that triplet feature interactions are very hard to visualize and hence  \emph{not} interpretable.

\begin{table}[t]
    \caption{Performance comparison with baselines. $\downarrow$: lower is better; $\uparrow$: higher is better.}
    \label{tab:comparison}
    \def\arraystretch{1.2}  % 1 is the default, row height
    \setlength{\tabcolsep}{1mm}  % col space separator
    \def\cw{1.5cm}
    \def\hs{1mm}
    \centering
    \footnotesize{
        \begin{tabular}{L{1.5cm}C{\cw}C{\cw}C{\cw}C{\cw}C{\cw}C{\cw}C{\cw}}
            \toprule
            \multirow{3}{*}{\textbf{Model}} & \textbf{CA} & \multirow{2}{*}{\textbf{FICO}} & \multirow{2}{*}{\textbf{CoverType}} & \multirow{2}{*}{\textbf{News.}} & \multirow{2}{*}{\textbf{CUB}} & \textbf{iNat.} & \textbf{Common} \\
            & \textbf{Housing} & & & & & \textbf{Birds} & \textbf{Objects} \\
            \cmidrule(l{\hs}r{\hs}){2-2}\cmidrule(l{\hs}r{\hs}){3-3}\cmidrule(l{\hs}r{\hs}){4-4}\cmidrule(l{\hs}r{\hs}){5-5}\cmidrule(l{\hs}r{\hs}){6-6}\cmidrule(l{\hs}r{\hs}){7-7}\cmidrule(l{\hs}r{\hs}){8-8}
            & RMSE $\downarrow$ & AUROC $\uparrow$ & acc@1 $\uparrow$ & acc@1 $\uparrow$ & acc@1 $\uparrow$ & acc@1 $\uparrow$ & mAP $\uparrow$ \\
            \midrule\midrule
            Linear & 
            0.7354 \tiny{\textcolor{Gray}{$\pm$0.0004}} & 
            0.7909 \tiny{\textcolor{Gray}{$\pm$0.0001}} & 
            0.7254 \tiny{\textcolor{Gray}{$\pm$0.0000}} & 
            0.8238 \tiny{\textcolor{Gray}{$\pm$0.0005}} & 
            0.7451 \tiny{\textcolor{Gray}{$\pm$0.0003}} & 
            0.3932 \tiny{\textcolor{Gray}{$\pm$0.0004}} & 
            0.1917 \tiny{\textcolor{Gray}{$\pm$0.0001}} \\
            EBM & 
            0.5586 \tiny{\textcolor{Gray}{$\pm$0.0002}} & 
            0.7985 \tiny{\textcolor{Gray}{$\pm$0.0001}} & 
            0.7392 \tiny{\textcolor{Gray}{$\pm$0.0004}} & 
            --- & 
            --- & 
            --- & 
            --- \\
            NAM & 
            0.5721 \tiny{\textcolor{Gray}{$\pm$0.0054}} & 
            0.7993 \tiny{\textcolor{Gray}{$\pm$0.0004}} & 
            0.7359 \tiny{\textcolor{Gray}{$\pm$0.0003}} & 
            --- & 
            0.7632 \tiny{\textcolor{Gray}{$\pm$0.0010}} & 
            0.4194 \tiny{\textcolor{Gray}{$\pm$0.0007}} & 
            0.2056 \tiny{\textcolor{Gray}{$\pm$0.0005}} \\
            NBM & 
            0.5638 \tiny{\textcolor{Gray}{$\pm$0.0013}} & 
            0.8048 \tiny{\textcolor{Gray}{$\pm$0.0005}} & 
            0.7369 \tiny{\textcolor{Gray}{$\pm$0.0002}} & 
            0.8446 \tiny{\textcolor{Gray}{$\pm$0.0012}} & 
            0.7683 \tiny{\textcolor{Gray}{$\pm$0.0007}} & 
            0.4227 \tiny{\textcolor{Gray}{$\pm$0.0007}} & 
            0.2168 \tiny{\textcolor{Gray}{$\pm$0.0006}} \\
            \midrule\midrule
            EB$^2$M & 
            0.4919 \tiny{\textcolor{Gray}{$\pm$0.0004}} & 
            0.7998 \tiny{\textcolor{Gray}{$\pm$0.0005}} & 
            --- & 
            --- & 
            --- & 
            --- & 
            --- \\
            NA$^2$M & 
            0.4921 \tiny{\textcolor{Gray}{$\pm$0.0078}} & 
            0.7992 \tiny{\textcolor{Gray}{$\pm$0.0003}} & 
            0.8872 \tiny{\textcolor{Gray}{$\pm$0.0006}} & 
            --- & 
            0.7713 \tiny{\textcolor{Gray}{$\pm$0.0011}} & 
            0.4591 \tiny{\textcolor{Gray}{$\pm$0.0010}} & 
            --- \\
            NB$^2$M & 
            0.4779 \tiny{\textcolor{Gray}{$\pm$0.0020}} & 
            0.8029 \tiny{\textcolor{Gray}{$\pm$0.0003}} & 
            0.8908 \tiny{\textcolor{Gray}{$\pm$0.0008}} & 
            --- & 
            0.7770 \tiny{\textcolor{Gray}{$\pm$0.0006}} & 
            0.4684 \tiny{\textcolor{Gray}{$\pm$0.0009}} & 
            0.2378 \tiny{\textcolor{Gray}{$\pm$0.0006}} \\
            \midrule\midrule
            XGBoost & 
            0.4428 \tiny{\textcolor{Gray}{$\pm$0.0006}} & 
            0.7925 \tiny{\textcolor{Gray}{$\pm$0.0008}} & 
            0.8860 \tiny{\textcolor{Gray}{$\pm$0.0003}} & 
            0.7677 \tiny{\textcolor{Gray}{$\pm$0.0009}} & 
            0.7186 \tiny{\textcolor{Gray}{$\pm$0.0008}} & 
            --- & 
            --- \\
            MLP & 
            0.5014 \tiny{\textcolor{Gray}{$\pm$0.0061}} & 
            0.7936 \tiny{\textcolor{Gray}{$\pm$0.0013}} & 
            0.9694 \tiny{\textcolor{Gray}{$\pm$0.0002}} & 
            0.8494 \tiny{\textcolor{Gray}{$\pm$0.0021}} & 
            0.7684 \tiny{\textcolor{Gray}{$\pm$0.0007}} & 
            0.4584 \tiny{\textcolor{Gray}{$\pm$0.0008}} & 
            0.2376 \tiny{\textcolor{Gray}{$\pm$0.0007}} \\
            \bottomrule
        \end{tabular}
    }
\end{table}

\subsection{Comparison with state of the art}
\label{sec:comparison_sota}

We finally compare NBM, as well as the pairwise interactions NB$^2$M counterpart, with neural-based state-of-the-art GAMs.
Namely, we compare with Neural Additive Model (NAM)~\cite{amf+21} and Neural Oblivious Decision Trees GAM (NODE-GAM)~\cite{ccg22}, and their pairwise interactions versions NA$^2$M and NODE-GA$^2$M.
Results are presented in Table~\ref{tab:comparison_sota}.

We compute NAM and NBM results using our codebase, while NODE-GAM results are reproduced from~\cite{ccg22}.
For a fair comparison on these datasets, we use training, validation, and, testing splits from NODE-GAM~\cite{ccg22}, and perform the same feature normalization, see Section~\ref{sec:datasets} for details. 
Without any additional model selection (architecture tuning, stochastic weight averaging, \etc) NBMs outperform NODE-GAMs on 7 out of 8 presented datasets (albeit marginally), and outperform NAM on all datasets. 
Note that, the idea of NBM is perpendicular to that of NODE-GAM and it is possible that bigger gains can be achieved by combining them, especially for Epsilon and Yahoo datasets, where NB$^2$Ms do not scale due to a high dimensionality and non-sparse nature of features.

\begin{table}[t]
    \caption{Performance comparison with state-of-the-art GAMs. NODE-GAM results are reproduced from~\cite{ccg22}. $\downarrow$: lower is better; $\uparrow$: higher is better. State-of-the-art performance shown in \textbf{bold}.}
    \label{tab:comparison_sota}
    \def\arraystretch{1.2}  % 1 is the default, row height
    \setlength{\tabcolsep}{0mm}  % col space separator
    \def\cw{1.5cm}
    \def\hs{1mm}
    \centering
    \footnotesize{
        \begin{tabular}{L{1.5cm}C{\cw}C{\cw}C{\cw}C{\cw}C{\cw}C{\cw}C{\cw}C{\cw}}
            \toprule
            \multirow{2}{*}{\textbf{Model}} & \textbf{MIMIC-II} & \textbf{Credit} & \textbf{Click} & \textbf{Epsilon} & \textbf{Higgs} & \textbf{Microsoft} & \textbf{Yahoo} & \textbf{Year} \\
            \cmidrule(l{\hs}r{\hs}){2-2}\cmidrule(l{\hs}r{\hs}){3-3}\cmidrule(l{\hs}r{\hs}){4-4}\cmidrule(l{\hs}r{\hs}){5-5}\cmidrule(l{\hs}r{\hs}){6-6}\cmidrule(l{\hs}r{\hs}){7-7}\cmidrule(l{\hs}r{\hs}){8-8}\cmidrule(l{\hs}r{\hs}){9-9}
            & AUROC $\uparrow$ & AUROC $\uparrow$ & Error $\downarrow$ & Error $\downarrow$ & Error $\downarrow$ & MSE $\downarrow$ & MSE $\downarrow$ & MSE $\downarrow$ \\
            \midrule\midrule
            NAM & 
            0.8539 \tiny{\textcolor{Gray}{$\pm$0.0004}} & 
            0.9766 \tiny{\textcolor{Gray}{$\pm$0.0027}} & 
            0.3317 \tiny{\textcolor{Gray}{$\pm$0.0005}} & 
            0.1079 \tiny{\textcolor{Gray}{$\pm$0.0002}} & 
            0.2972 \tiny{\textcolor{Gray}{$\pm$0.0001}} & 
            0.5824 \tiny{\textcolor{Gray}{$\pm$0.0002}} & 
            0.6093 \tiny{\textcolor{Gray}{$\pm$0.0003}} & 
            85.25 ~~~\tiny{\textcolor{Gray}{$\pm$0.01}} \\
            NODE GAM & 
            0.8320 \tiny{\textcolor{Gray}{$\pm$0.0110}} & 
            0.9810 \tiny{\textcolor{Gray}{$\pm$0.0110}} & 
            0.3342 \tiny{\textcolor{Gray}{$\pm$0.0001}} & 
            0.1040 \tiny{\textcolor{Gray}{$\pm$0.0003}} & 
            0.2970 \tiny{\textcolor{Gray}{$\pm$0.0001}} & 
            0.5821 \tiny{\textcolor{Gray}{$\pm$0.0004}} & 
            0.6101 \tiny{\textcolor{Gray}{$\pm$0.0006}} & 
            \textbf{85.09} ~~~\tiny{\textcolor{Gray}{$\pm$0.01}} \\
            NBM & 
            \textbf{0.8549} \tiny{\textcolor{Gray}{$\pm$0.0004}} & 
            \textbf{0.9829} \tiny{\textcolor{Gray}{$\pm$0.0014}} & 
            \textbf{0.3312} \tiny{\textcolor{Gray}{$\pm$0.0002}} & 
            \textbf{0.1038} \tiny{\textcolor{Gray}{$\pm$0.0002}} & 
            \textbf{0.2969} \tiny{\textcolor{Gray}{$\pm$0.0001}} & 
            \textbf{0.5817} \tiny{\textcolor{Gray}{$\pm$0.0001}} & 
            \textbf{0.6084} \tiny{\textcolor{Gray}{$\pm$0.0001}} & 
            85.10 ~~~\tiny{\textcolor{Gray}{$\pm$0.01}} \\
            \midrule\midrule
            NA$^2$M & 
            0.8639 \tiny{\textcolor{Gray}{$\pm$0.0011}} & 
            0.9824 \tiny{\textcolor{Gray}{$\pm$0.0032}} & 
            0.3290 \tiny{\textcolor{Gray}{$\pm$0.0005}} & 
            --- & 
            0.2555 \tiny{\textcolor{Gray}{$\pm$0.0003}} & 
            0.5622 \tiny{\textcolor{Gray}{$\pm$0.0003}} & 
            --- & 
            79.80 ~~~\tiny{\textcolor{Gray}{$\pm$0.05}} \\
            NODE GA$^2$M & 
            0.8460 \tiny{\textcolor{Gray}{$\pm$0.0110}} & 
            \textbf{0.9860} \tiny{\textcolor{Gray}{$\pm$0.0100}} & 
            0.3307 \tiny{\textcolor{Gray}{$\pm$0.0001}} & 
            0.1050 \tiny{\textcolor{Gray}{$\pm$0.0002}} & 
            0.2566 \tiny{\textcolor{Gray}{$\pm$0.0003}} & 
            \textbf{0.5618} \tiny{\textcolor{Gray}{$\pm$0.0003}} & 
            \textbf{0.5807} \tiny{\textcolor{Gray}{$\pm$0.0004}} & 
            79.57 ~~~\tiny{\textcolor{Gray}{$\pm$0.12}} \\
            NB$^2$M & 
            \textbf{0.8690} \tiny{\textcolor{Gray}{$\pm$0.0010}} & 
            0.9856 \tiny{\textcolor{Gray}{$\pm$0.0017}} & 
            \textbf{0.3286} \tiny{\textcolor{Gray}{$\pm$0.0002}} & 
            --- & 
            \textbf{0.2545} \tiny{\textcolor{Gray}{$\pm$0.0002}} & 
            \textbf{0.5618} \tiny{\textcolor{Gray}{$\pm$0.0002}} & 
            --- & 
            \textbf{79.01} ~~~\tiny{\textcolor{Gray}{$\pm$0.03}} \\
            \bottomrule
        \end{tabular}
    }
\end{table}

\subsection{Stability and interpretability}
\label{sec:visualizations}

The interpretability of GAMs comes from the fact that the learned shape functions can be easily visualized.
In the same manner as the other GAM approaches, each feature's importance in an NBM can be represented by a unique shape function that \emph{exactly} describes how the NBM computes a prediction.
We demonstrate this on the CA Housing dataset because it has a small number of input features ($D=8$) and the full model can conveniently be visualized in a single row, see Figure~\ref{fig:shape_functions}.

Towards this purpose, we train an ensemble of 100 models by running different random seeds with optimal hyper-parameters, in order to analyze when the models learn the same shape and when they diverge.
Following~\citet{amf+21}, we set the average score for each shape function to be zero by subtracting the respective mean feature score.
Next, we plot each shape function as $f_i(x_i)$ \vs $x_i$ for each model in the ensemble using a semi-transparent line, and an average ensemble shape function using a thick line.
Finally, the $x$-axis is divided by bars depicting the normalized data density, \ie, darker areas contain more data points.
Figure~\ref{fig:shape_functions} depicts an ensemble of NAMs~\cite{amf+21} (upper row) and NBMs (bottom row).
Although the shape functions are correlated for most features, we observe that NBMs diverge much less in the cases where there are only few data points (light / white areas in the graphs).
This is due to the fact that the basis network in NBM is trained jointly with only linear composing weights being trained for each feature separately. In contrast, each feature in NAM  has its own network, which becomes unstable and diverges in cases of few training data points. 

Finally, to quantify stability, we compute the standard deviation for each visualized shape function over 100 models, and report the mean standard deviation over all features.
For NAMs, this mean standard deviation is 0.9921, while for NBMs it is 0.1987. 

\begin{figure}
    \centering
    \includegraphics[trim=0 0 0 0, width=1.0\textwidth]{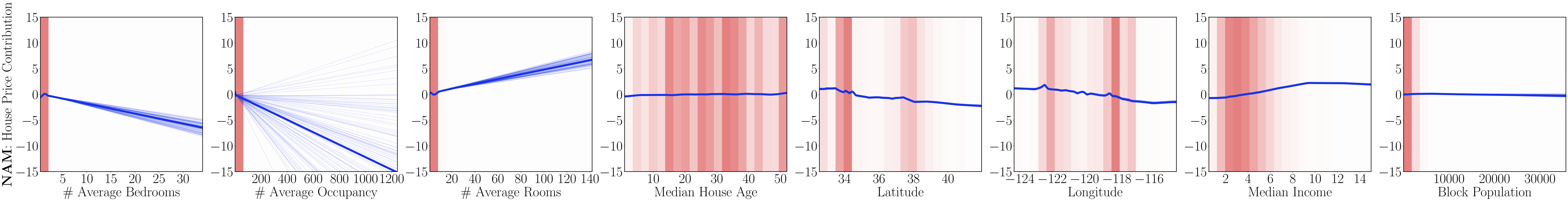}
    \includegraphics[trim=0 0 0 0, width=1.0\textwidth]{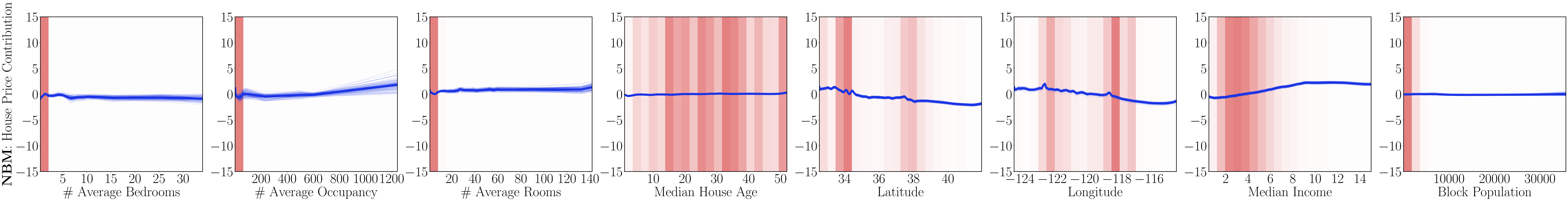}
    \caption{
        \label{fig:shape_functions}
        NAM (upper row) and NBM (bottom row) shape functions $f_i$ for the CA Housing dataset.
    }
\end{figure}

\subsection{Ablation study}
\label{sec:ablation}

\paragraph{Number of basis functions.}
We evaluate the robustness of NBMs \wrt the choice of the number of basis functions $B$.
Results are presented in Figure~\ref{fig:num_bases} for Newsgroups, CoverType, and, CUB.
We observe that NBMs are not overly sensitive to the choice of $B$.
Rather than tuning this hyperparameter, we recommend setting $B=100$ for NBMs and $B=200$ for NB$^2$Ms as it performs well across a large variety of datasets we experimented with.
Although, \eg, for NB$^2$Ms on CoverType, using a larger number of basis functions leads to some performance gains, it comes with a throughput trade-off, as computing the linear combination of bases starts becoming the bottleneck.
Thus, we suggest using the recommended values as a great trade-off between accuracy and throughput.

\paragraph{Multiple subnets.}
\citet{amf+21} propose to extend NAMs to multi-task / multi-class setups by associating each feature with multiple subnets.
This method is directly applicable on NBMs as well, by using $S$ different networks to learn $S$ sets of basis functions.
We compare few options on the multi-class CoverType predictions with the number of subnets $S{=}1$ and $S{=}5$ in Table~\ref{tab:num_subnets}.
Both NAMs and NBMs have a similarly small relative accuracy improvement when using 5 subnets over the 1 subnet, however that comes at a 5$\times$ increase in the number of parameters and 5$\times$ lower throughput.
With the accuracy-complexity trade-off in mind, we did not see much benefit of using $S=5$ on our datasets, so we keep $S=1$ across all other experiments.
Interestingly, NB$^2$M with $S{=}1$ has a comparable throughput to NAM with $S{=}5$, while achieving an impressive accuracy gain, see Table~\ref{tab:num_subnets}.

\begin{figure}
\centering
\begin{minipage}{0.49\textwidth}
\centering
\begin{tikzpicture}
    \begin{axis}[
        width=0.98\textwidth,
        height=0.70\textwidth,
        ylabel={acc@1},
        xlabel={$B$},
        xmode=log,
        legend pos=north west,
        legend style={cells={anchor=west}, font =\tiny, fill opacity=1.0, row sep=-3.0pt, yshift=7.5ex, xshift=-0.75ex},
        legend columns = 2,
        xmin = 4,
        xmax = 1250,
        ymin = 0.72,
        ymax = 0.9,
        xtick={5, 10, 25, 50, 100, 200, 500, 1000},
        xticklabels={5, 10, 25, 50, 100, 200, 500, 1000},
        ytick={0.72,0.74,...,1.0},
        tick label style ={font=\scriptsize},
        % log ticks with fixed point,
        grid=both,
        ylabel style={yshift=-0.5ex},      
        xlabel style={yshift=0.5ex},
    ]
    
    \addplot[color=Blue, solid, line width=1pt, error bars/.cd, y dir=both, y explicit, error bar style={line width=1pt,solid}, error mark options={line width=1pt,mark size=3pt,rotate=90}]
    table [x=x, y=y, y error=y-err]{%
      x y y-err
      5 0.7366 0.0005
      10 0.7368 0.0005
      25 0.7368 0.0004
      50 0.7368 0.0002
      100 0.7369 0.0002
      200 0.7367 0.0002
      500 0.7368 0.0003
      1000 0.7366 0.0003
    };
    \leg{NBM, CoverType}
    
    \addplot[color=Magenta, solid, line width=1pt, error bars/.cd, y dir=both, y explicit, error bar style={line width=1pt,solid}, error mark options={line width=1pt,mark size=3pt,rotate=90}]
    table [x=x, y=y, y error=y-err]{%
      x y y-err
      5 0.8425 0.0014
      10 0.8570 0.0003
      25 0.8701 0.0008
      50 0.8775 0.0006
      100 0.8841 0.0008
      200 0.8903 0.0003
      500 0.8946 0.0004
      1000 0.8979 0.0003
    };
    \leg{NB$^2$M, CoverType}
    
    \addplot[color=Green, solid, line width=1pt, error bars/.cd, y dir=both, y explicit, error bar style={line width=1pt,solid}, error mark options={line width=1pt,mark size=3pt,rotate=90}]
    table [x=x, y=y, y error=y-err]{%
      x y y-err
      5 0.7652 0.0010
      10 0.7659 0.0007
      25 0.7687 0.0004
      50 0.7685 0.0010
      100 0.7684 0.0009
      200 0.7664 0.0005
      500 0.7661 0.0003
      1000 0.7642 0.0002
    };
    \leg{NBM, CUB}
    
    \addplot[color=BurntOrange, solid, line width=1pt, error bars/.cd, y dir=both, y explicit, error bar style={line width=1pt,solid}, error mark options={line width=1pt,mark size=3pt,rotate=90}]
    table [x=x, y=y, y error=y-err]{%
      x y y-err
      5 0.7719 0.0002
      10 0.7740 0.0007
      25 0.7746 0.0005
      50 0.7757 0.0008
      100 0.7759 0.0003
      200 0.7771 0.0006
      500 0.7769 0.0009
      1000 0.7749 0.0004
    };
    \leg{NB$^2$M, CUB}
    
    \addplot[color=Red, solid, line width=1pt, error bars/.cd, y dir=both, y explicit, error bar style={line width=1pt,solid}, error mark options={line width=1pt,mark size=3pt,rotate=90}]
    table [x=x, y=y, y error=y-err]{%
      x y y-err
      5 0.8299 0.0072
      10 0.8371 0.0053
      25 0.8387 0.0050
      50 0.8407 0.0051
      100 0.8419 0.0038
      200 0.8408 0.0039
      500 0.8406 0.0039
      1000 0.8411 0.0038
    };
    \leg{NBM, Newsgroups}
    
  \end{axis}
\end{tikzpicture}
\vspace{-0.5em}
\caption{Ablation on the number of bases $B$.}
\label{fig:num_bases}
\end{minipage}
\begin{minipage}{0.49\textwidth}
\centering
\captionsetup{type=table} %% tell latex to change to table
\caption{Ablation on the number of subnets $S$.}
\label{tab:num_subnets}
\def\arraystretch{1.2}  % 1 is the default, row height
\setlength{\tabcolsep}{1.20mm}  % col space separator
\def\cw{1cm}
\def\hs{1mm}
\centering
\footnotesize{
    \begin{tabular}{L{1.1cm}C{0.5cm}C{\cw}C{\cw}C{\cw}}
        \toprule
        \multirow{2}{*}{\textbf{Model}} & \multirow{2}{*}{$S$} & \multicolumn{3}{c}{\textbf{CoverType}} \\
        \cmidrule(l{\hs}r{\hs}){3-5}
        & & \#param & $\bm{x}$/sec & acc@1 \\
        \midrule\midrule
        NAM & 1 & 363K & ~~80K & 0.7359 \\
        NBM & 1 & ~~70K & 530K & 0.7369 \\
        \midrule\midrule
        NAM & 5 & 1.8M & 16K & 0.7417 \\
        NBM & 5 & 350K & 88K & 0.7435 \\
        \midrule\midrule
        NB$^2$M & 1 & 463K & 15K & 0.8908 \\
        \bottomrule
    \end{tabular}
}
\end{minipage}
\end{figure}

\section{Conclusion and future work}
\label{sec:conclusion}

This work describes novel Neural Basis Models (NBMs), which belong to a new subfamily of Generalized Additive Models (GAMs), and utilize deep learning techniques to scale to large datasets and high-dimensional features. Our approach addresses several scalability and performance issues associated with GAMs, while still preserving their interpretability compared to black-box deep neural networks (DNNs). We show that our NBMs and NB$^2$Ms achieve state-of-the-art performance on a large variety of datasets and tasks, while being much smaller and faster than other neural-based GAMs. As a result, they can be used as a drop-in replacements for the black-box DNNs. 

We do recognize that our approach, though highly scalable, has limitations \wrt number of features.
Beyond 10,000 dense (or 1 million sparse) features, we would need to apply some form of feature selection~\cite{lcgh13,tlp+18,ccg22} to scale further.
Scalability issue is even more pronounced when modeling pairwise interactions in NB$^2$M.
However, NBMs can still handle an order of magnitude more than what can be handled by NAM or other GAM approaches that do not perform feature selection.

There are many future directions for this line of work. First, for the computer vision domain, we assume an intermediate, interpretable concept layer~\cite{knt+20} on which NBMs and in general GAMs can be applied. It would be interesting to explore visual interpretability by either directly going to the pixel space with NBMs or learning visual features that can do recognition with lower order interactions in NBM framework (for example, NB$^2$Ms). Finally, the idea of using shared basis is generic. Although, we used neural networks to learn these basis, we can enhance the model interpretability further for higher order interactions, by using more interpretable learning functions such as polynomials.

% \newpage
\small{\bibliography{main}}
\bibliographystyle{abbrvnat}

% \newpage
\section*{Checklist}

% %%% BEGIN INSTRUCTIONS %%%
% The checklist follows the references.  Please
% read the checklist guidelines carefully for information on how to answer these
% questions.  For each question, change the default \answerTODO{} to \answerYes{},
% \answerNo{}, or \answerNA{}.  You are strongly encouraged to include a {\bf
% justification to your answer}, either by referencing the appropriate section of
% your paper or providing a brief inline description.  For example:
% \begin{itemize}
%   \item Did you include the license to the code and datasets? \answerYes{See Section~\ref{gen_inst}.}
%   \item Did you include the license to the code and datasets? \answerNo{The code and the data are proprietary.}
%   \item Did you include the license to the code and datasets? \answerNA{}
% \end{itemize}
% Please do not modify the questions and only use the provided macros for your
% answers.  Note that the Checklist section does not count towards the page
% limit.  In your paper, please delete this instructions block and only keep the
% Checklist section heading above along with the questions/answers below.
% %%% END INSTRUCTIONS %%%

\begin{enumerate}

\item For all authors...
\begin{enumerate}
  \item Do the main claims made in the abstract and introduction accurately reflect the paper's contributions and scope?
    \answerYes{}
  \item Did you describe the limitations of your work?
    \answerYes{}
  \item Did you discuss any potential negative societal impacts of your work?
    \answerYes{}
  \item Have you read the ethics review guidelines and ensured that your paper conforms to them?
    \answerYes{}
\end{enumerate}

\item If you are including theoretical results...
\begin{enumerate}
  \item Did you state the full set of assumptions of all theoretical results?
    \answerYes{See Sec.~\ref{sec:appendix_full_proofs}.}
        \item Did you include complete proofs of all theoretical results?
    \answerYes{See Sec.~\ref{sec:appendix_full_proofs}.}
\end{enumerate}

\item If you ran experiments...
\begin{enumerate}
  \item Did you include the code, data, and instructions needed to reproduce the main experimental results (either in the supplemental material or as a URL)?
    \answerYes{}
  \item Did you specify all the training details (e.g., data splits, hyperparameters, how they were chosen)?
    \answerYes{}
        \item Did you report error bars (e.g., with respect to the random seed after running experiments multiple times)?
    \answerYes{}
        \item Did you include the total amount of compute and the type of resources used (e.g., type of GPUs, internal cluster, or cloud provider)?
    \answerYes{}
\end{enumerate}

\item If you are using existing assets (e.g., code, data, models) or curating/releasing new assets...
\begin{enumerate}
  \item If your work uses existing assets, did you cite the creators?
    \answerYes{}
  \item Did you mention the license of the assets?
    \answerYes{}
  \item Did you include any new assets either in the supplemental material or as a URL?
    \answerNA{No new assets except code, see 3(a).}
  \item Did you discuss whether and how consent was obtained from people whose data you're using/curating?
    \answerYes{}
  \item Did you discuss whether the data you are using/curating contains personally identifiable information or offensive content?
    \answerYes{}
\end{enumerate}

\item If you used crowdsourcing or conducted research with human subjects...
\begin{enumerate}
  \item Did you include the full text of instructions given to participants and screenshots, if applicable?
    \answerNA{}
  \item Did you describe any potential participant risks, with links to Institutional Review Board (IRB) approvals, if applicable?
    \answerNA{}
  \item Did you include the estimated hourly wage paid to participants and the total amount spent on participant compensation?
    \answerNA{}
\end{enumerate}

\end{enumerate}

\newpage
\appendix
\counterwithin{figure}{section}
\counterwithin{table}{section}
\counterwithin{equation}{section}

\section{Appendix}
\vspace{1em}

\subsection{Dataset descriptions}
\label{sec:dataset_descriptions}

\paragraph{CA Housing~\cite{pb97,ca_housing}.}
The target to \textbf{regress} is the median house value for California districts (1990 U.S. census), expressed in hundreds of thousands of dollars, from 8 household-related variables.

\paragraph{FICO~\cite{fico}.}
Anonymized dataset of line of credit applications made by real homeowners.
The customers in this dataset have requested a credit line in the range of \$5,000 -- \$150,000.
The task is to make a \textbf{binary prediction} whether applicants will repay their account within 2 years.

\paragraph{CoverType~\cite{bd99,covtype,uci}.}
The samples in this dataset correspond to 30$\times$30m patches of forest in the U.S., collected for the task of predicting each patch’s cover type, \ie, the dominant species of tree.
There are seven covertypes, making this a \textbf{multi-class classification} problem.

\paragraph{Newsgroups~\cite{l95,newsgroups}.}
This dataset is a collection of news documents, partitioned across 20 different newsgroups, making this a \textbf{multi-class classification} problem.
Each article is represented by a \emph{tf-idf} term for each word in the training split word vocabulary.
Newsgroups dataset has sparsity 99.9\%, \ie, only around 150 words from a vocabulary of 150k words appears per a given article, on average.

\paragraph{\textbf{MIMIC-II}~\cite{svr+11,mimic_ii}.}
The task is to make a \textbf{binary prediction} on mortality of Intensive Care Unit (ICU) patients.
Contains physiologic signals and vital signs time series captured from patient monitors for tens of thousands of ICU patients. 

\paragraph{\textbf{Credit}~\cite{d15,credit}.}
This dataset contains anonymized credit card transactions labeled as fraudulent or genuine, and the task is to make a \textbf{binary prediction} between them.

\paragraph{\textbf{Click}~\cite{click}.}
Subset of data from the 2012 KDD Cup. Namely, 500,000 objects of a positive class and 500,000 objects of a negative class were randomly sampled to create a \textbf{binary prediction} task.

\paragraph{\textbf{Epsilon}~\cite{epsilon}.}
Dataset for \textbf{binary prediction} from the PASCAL Large Scale Learning Challenge.

\paragraph{\textbf{Higgs}~\cite{bsw14,higgs}.} 
The problem is to \textbf{binary predict} whether the given event produces Higgs bosons. 

\paragraph{\textbf{Microsoft}~\cite{ql13,microsoft}.}
Ranking dataset, where features are extracted from query-url pairs. 
Each pair has relevance judgment labels to \textbf{regress}, which take values from 0 (irrelevant) to 4 (very relevant).

\paragraph{\textbf{Yahoo}~\cite{yahoo}.} Another ranking dataset with query-url pairs that have labels to \textbf{regress} from 0 to 4.

\paragraph{\textbf{Year}~\cite{year}.}
Subset of Million Song Dataset. The task is to \textbf{regress} the release year of the song by using the audio features.
It contains tracks from 1922 to 2011.

\paragraph{CUB~\cite{cub,wbw+11}.}
This \textbf{image classification} dataset consists of images of 200 bird classes.
All images are annotated with keypoint locations of 15 bird parts (\eg, beak, wing, crown) and each location is associated with one or more part-attribute labels (\eg, orange leg, striped wing).
Some of the keypoint annotations distinguish between the left-right instances of parts, \eg, `left wing' / `right wing', `left eye' / `right eye'.
We treat these as the same part, \ie, `left wing' and `right wing' as `wing'.

\paragraph{iNaturalist Birds~\cite{inat,vcb21}.}
Another \textbf{image classification} dataset that contains 1,486 bird classes.
The full iNaturalist 2021 dataset consists of various super-categories (\eg, plants, insects, birds), covering 10K species in total.
The Birds super-category contains 1,486 bird classes and more challenging scenes compared to CUB.
Therefore, the iNaturalist  dataset is a challenging testbed for any image classification method.
However, note that this dataset lacks keypoint annotations.

\paragraph{Common Objects.}
Proprietary \textbf{object detection} dataset created by collecting public images from Instagram\footnote{\href{https://www.instagram.com}{\ttfamily www.instagram.com}}.
Dataset contains 114 common household objects, (\eg, stove, bed, table), plus a background class, with bounding box locations, 200 parts and 54 attributes.
Each bounding box for each image is pre-processed using compositions of parts (\eg, leg, handle, top) and attributes (\eg, colors, textures, shapes) to extract 2,618 interpretable features and 100k pairwise feature interactions.
Common Objects dataset has sparsity 97\%, \ie, only around 76 part-attribute compositions from a vocabulary of 2618 compositions are active for a given object, on average.
% Omitted to preserve anonymity during review, will be filled in upon acceptance.

\vspace{1em}
\subsection{Hyper-parameters}
\label{sec:hyper_parameters}
\vspace{1em}

Linear, MLP, NAM, and NBM are trained using the Adam with decoupled weight decay (AdamW) optimizer~\cite{lh19}, on 8$\times$V100 GPU machines with 32 GB memory, and a batch size of at most 1024 per GPU.
We train for 1,000, 500, 100, or, 50 epochs, depending on the size and feature dimensionality of the dataset.
The learning rate is decayed with cosine annealing~\cite{lh16} from the starting value until zero.
We find optimal hyper-parameters for all models using validation set and random search, following the detailed guidelines.

\paragraph{Linear.}
We tune the starting learning rate in the continuous interval $[1\mathrm{e}{-}5, 100)$, weight decay in the interval $[1\mathrm{e}{-}10, 1.0)$.

\paragraph{MLP.}
We tune the starting learning rate in the continuous interval $[1\mathrm{e}{-}5, 1.0)$, weight decay in the interval $[1\mathrm{e}{-}10, 1.0)$, dropout coefficients in the discrete set $\{0, 0.05, 0.1, 0.2, 0.3, 0.4, 0.5, 0.6, 0.7, 0.8, 0.9\}$.

\paragraph{NAM.}
We tune the starting learning rate in the continuous interval $[1\mathrm{e}{-}5, 1.0)$, weight decay in the interval $[1\mathrm{e}{-}10, 1.0)$, output penalty coefficient in the interval $[1\mathrm{e}{-}{7}, 100)$, dropout and feature dropout coefficients in the discrete set $\{0, 0.05, 0.1, 0.2, 0.3, 0.4, 0.5, 0.6, 0.7, 0.8, 0.9\}$.

\paragraph{NBM.}
We tune the starting learning rate in the continuous interval $[1\mathrm{e}{-}5, 1.0)$, weight decay in the interval $[1\mathrm{e}{-}10, 1.0)$, output penalty coefficient in the interval $[1\mathrm{e}{-}{7}, 100)$, dropout and basis dropout coefficients in the discrete set $\{0, 0.05, 0.1, 0.2, 0.3, 0.4, 0.5, 0.6, 0.7, 0.8, 0.9\}$.
Optimal hyper-parameters for NBMs and NB$^2$Ms on all datasets are given in Table~\ref{tab:optimal_hyper}.

\begin{table}[t]
    \caption{
    \label{tab:optimal_hyper}
    Optimal hyper-parameters for NBMs and NB$^2$Ms on all datasets.
    }
    \setlength{\tabcolsep}{0.0mm}  % col space separator
    \def\cw{1.6cm}
    \centering
    \footnotesize{
    \begin{tabular}{L{2.5cm}C{\cw}C{\cw}C{\cw}C{\cw}C{\cw}C{\cw}C{\cw}}
        \toprule
        \multicolumn{8}{l}{\normalsize{\textbf{\hspace{2mm}NBM:} $[256,256,128]$ hidden units, 100 basis functions}} \\
        \midrule
        \multirow{2}{*}{\textbf{Dataset}} & Number of & Batch & Learning & Weight & \multirow{2}{*}{Dropout} & Basis & Output \\
        & epochs & size & rate & decay & & dropout & penalty \\
        \midrule
        \textbf{CA Housing} & 1,000 & 1,024 & 0.00197 & 1.568e-5 & 0.0 & 0.05 & 1.439e-4 \\
        \textbf{FICO} & 1,000 & 1,024 & 0.02176 & 1.684e-5 & 0.3 & 0.7 & 2.462e-4 \\
        \textbf{CoverType} & 500 & 1,024 & 0.01990 & 5.931e-7 & 0.0 & 0.0 & 0.05533 \\
        \textbf{Newsgroups} & 500 & 512 & 3.133e-4 & 1.593e-8 & 0.1 & 0.3 & 4.578 \\
        \textbf{MIMIC-II} & 1,000 & 1,024 & 0.01460 & 3.177e-6 & 0.5 & 0.1 & 2.318 \\
        \textbf{Credit} & 500 & 1,024 & 0.00391 & 1.574e-6 & 0.0 & 0.9 & 0.03737 \\
        \textbf{Click} & 500 & 1,024 & 2.745e-4 & 7.21e-10 & 0.0 & 0.5 & 20.085 \\
        \textbf{Epsilon} & 500 & 1,024 & 3.776e-5 & 1.507e-7 & 0.3 & 0.4 & 0.00273 \\
        \textbf{Higgs} & 50 & 1,024 & 1.792e-4 & 1.087e-9 & 0.0 & 0.0 & 3.906e-5 \\        
        \textbf{Microsoft} & 500 & 1,024 & 1.677e-4 & 1.969e-7 & 0.1 & 0.3 & 1.986e-4 \\
        \textbf{Yahoo} & 500 & 1,024 & 0.00446 & 1.399e-8 & 0.1 & 0.3 & 0.01688 \\
        \textbf{Year} & 500 & 1,024 & 8.780e-5 & 1.580e-7 & 0.1 & 0.1 & 2.592e-5 \\
        \textbf{CUB} & 500 & 128 & 0.01173 & 0.12910 & 0.7 & 0.3 & 4.739 \\
        \textbf{iNaturalist Birds} & 100 & 1,024 & 0.00140 & 3.548e-5 & 0.0 & 0.2 & 1.423e-5 \\
        \textbf{Common Objects} & 100 & 1,024 & 0.12480 & 1.001e-5 & 0.1 & 0.0 & 0.0 \\
        \bottomrule
        \toprule
        \multicolumn{8}{l}{\normalsize{\textbf{\hspace{2mm}NB$^2$M:} $[256,256,128]$ hidden units, 200 basis functions}} \\
        \midrule
        \multirow{2}{*}{\textbf{Dataset}} & Number of & Batch & Learning & Weight & \multirow{2}{*}{Dropout} & Basis & Output \\
        & epochs & size & rate & decay & & dropout & penalty \\
        \midrule
        \textbf{CA Housing} & 1,000 & 1,024 & 0.00190 & 7.483e-9 & 0.0 & 0.05 & 1.778e-6 \\
        \textbf{FICO} & 1,000 & 1,024 & 2.287e-4 & 3.546e-7 & 0.1 & 0.7 & 0.19330 \\
        \textbf{CoverType} & 500 & 512 & 0.00268 & 1.660e-7 & 0.0 & 0.0 & 0.00155 \\
        \textbf{MIMIC-II} & 1,000 & 1,024 & 1.796e-4 & 3.494e-4 & 0.1 & 0.5 & 0.05964 \\
        \textbf{Credit} & 500 & 1,024 & 3.745e-4 & 4.610e-5 & 0.5 & 0.0 & 0.25280 \\
        \textbf{Click} & 500 & 1,024 & 9.614e-4 & 0.00159 & 0.0 & 0.5 & 0.05773 \\
        \textbf{Higgs} & 50 & 1,024 & 0.00201 & 2.202e-4 & 0.0 & 0.1 & 1.969e-7 \\
        \textbf{Microsoft} & 100 & 128 & 1.640e-4 & 1.552e-8 & 0.0 & 0.9 & 2.928e-6 \\
        \textbf{Year} & 100 & 256 & 3.180e-4 & 1.696e-8 & 0.0 & 0.9 & 4.454e-4 \\
        \textbf{CUB} & 500 & 32 & 2.629e-4 & 0.03209 & 0.0 & 0.0 & 96.894 \\
        \textbf{iNaturalist Birds} & 100 & 32 & 6.735e-5 & 9.870e-5 & 0.05 & 0.0 & 3.785 \\
        \textbf{Common Objects} & 100 & 64 & 0.03127 & 1.013e-4 & 0.0 & 0.2 & 8.126 \\
        \bottomrule
    \end{tabular}
    }
    \vspace{1.5em}
\end{table}

Finally, for EBMs and XGBoost, CPU machines are used, with hyper-parameter search as follows.

\paragraph{EBM.}
We tune the maximum bins from the set $\{8, 16, 32, 64, 128, 256, 512\}$, number of interactions from $\{0, 2, 4, 8, 16, 32, 64, 128, 256, 512\}$ (they are set to $0$ for EBMs and $\geq 0$ for EB$^2$Ms), learning rate in the continuous range from $[1\mathrm{e}{-}6, 100)$, the maximum rounds from the set $\{1000, 2000, 4000, 8000, 16000\}$, the minimum samples in a leaf node from the set $\{1, 2, 4, 8, 10, 15, 20, 25, 50\}$, and the same range is used for the maximum leaves parameter. For binning, we search within the set \{``quantile'', ``uniform'', ``quantile\_humanized''\}. The inner bags and outer bags are selected from the range $\{1, 2, 4, 8, 16, 32, 64, 128\}$.

\paragraph{XGBoost.} 
We tune the number of estimators from $\{1, 2, 4, 8, 10, 20, 50, 100, 200, 250, 500, 1000\}$, the max-depth from the set $\{\infty, 2, 5, 10, 20, 25, 50, 100, 2000\}$, $\eta$ over a continuous range $[0.0, 1.0)$, and use the same for the {\tt subsample} and {\tt colsample\_bytree} parameters.

\vspace{1em}
\subsection{Additional discussion \wrt NAM}
\label{sec:additional_discussion_wrt_nam}

\begin{figure}
    \centering
    \begin{tikzpicture}
        \begin{axis}[%
            width=0.48\textwidth,
            height=0.45\textwidth,
            ylabel={$|\textbf{NAM}| / |\textbf{NBM}|$},
            xlabel={$D$},
            xmode=log,
            tick label style  = {font=\small},
            legend pos=north west,
            legend style={cells={anchor=west}, font=\scriptsize, fill opacity=0.8, row sep=-2.5pt},
            xmin = 1,
            xmax = 100000,
            ymin = 0,
            ymax = 70,
            xtick={0, 1, 10, 100, 1000, 10000, 100000},
            xticklabels={1, 10, 100, 1k, 10k, 100k},
            ytick={0, 10, 20, 30, 40, 50, 60, 70},
            log ticks with fixed point,
            grid=both,
            ylabel style={yshift=-1ex},      
            xlabel style={yshift=1ex},
            samples at={1,2,...,10,20,...,100,200,...,1000,2000,...,10000,20000,30000,40000,50000,60000,70000,80000,90000,100000}
        ]
        \addplot [Blue, solid, line width=2.0, no marks] {64.02 / (628.20 / x + 1.01};
        \leg{$C=1$}
        \addplot [Red, solid, line width=2.0, no marks] {64.11 / (628.20 / x + 1.1};
        \leg{$C=10$}
        \addplot [Green, solid, line width=2.0, no marks] {65.01 / (628.20 / x + 2.0};
        \leg{$C=100$}
        \addplot [Magenta, solid, line width=2.0, no marks] {74.01 / (628.20 / x + 11.0};
        \leg{$C=1000$}
        \end{axis}
    \end{tikzpicture}
    \hspace{1em}
    \begin{tikzpicture}
        \begin{axis}[%
            width=0.48\textwidth,
            height=0.45\textwidth,
            ylabel={$|\textbf{NA$^2$M}| / |\textbf{NB$^2$M}|$},
            xlabel={$D$},
            xmode=log,
            tick label style  = {font=\small},
            legend pos=north west,
            legend style={cells={anchor=west}, font=\scriptsize, fill opacity=0.8, row sep=-2.5pt},
            xmin = 1,
            xmax = 100000,
            ymin = 0,
            ymax = 70,
            xtick={0, 1, 10, 100, 1000, 10000, 100000},
            xticklabels={1, 10, 100, 1k, 10k, 100k},
            ytick={0, 10, 20, 30, 40, 50, 60, 70},
            log ticks with fixed point,
            grid=both,
            ylabel style={yshift=-1ex},      
            xlabel style={yshift=1ex},
            samples at={1,2,...,10,20,...,100,200,...,1000,2000,...,10000,20000,30000,40000,50000,60000,70000,80000,90000,100000}
        ]
        \addplot [Blue, solid, line width=2.0, no marks] {64.02 / (1256.40 / (x*(x-1)) + 1.01};
        % \leg{$C=1$}
        \addplot [Red, solid, line width=2.0, no marks] {64.11 / (1256.40 / (x*(x-1)) + 1.1};
        % \leg{$C=10$}
        \addplot [Green, solid, line width=2.0, no marks] {65.01 / (1256.40 / (x*(x-1)) + 2.0};
        % \leg{$C=100$}
        \addplot [Magenta, solid, line width=2.0, no marks] {74.01 / (1256.40 / (x*(x-1)) + 11.0};
        % \leg{$C=1000$}
        \end{axis}
    \end{tikzpicture}
    \caption{
        \label{fig:params_multi_class}
        NAM \vs NBM: \#parameters (left); NA$^2$M \vs NB$^2$M: \#parameters (right).
    }
    \vspace{1em}
\end{figure}

\paragraph{Number of parameters for multi-class and pairwise feature interactions.}
We compare number of weight parameters needed to learn NAM \vs NBM for the multi-class task.
This discussion is an extension of the discussion in Section~\ref{sec:discussion}.
Let us denote with $M$ the number of parameters in MLP for each feature in NAM, and with $N$ the number of parameters in MLP for bases in NBM.
In most experiments the optimal NAM has 3 hidden layers with 64, 64 and 32 units ($M=6401$), and, NBM has 3 hidden layers with 256, 128, 128 units ($N=62820$) and $B=100$ basis functions.
Finally, let us denote with $D$ the input feature dimensionality, and with $C$ the number of classes in the multi-class task.
Then the ratio of number of parameters in NAM \vs NBM is given by,
\begin{equation}
    \label{eq:params_multi_class}
    \frac{|\textbf{NAM}|}{|\textbf{NBM}|} = \frac{D \cdot M + D \cdot C}{N + D \cdot B + D \cdot C} = \frac{6401 + C}{\frac{62820}{D} + 100 + C}.
\end{equation}
Similarly, in the case of pairwise feature interactions in NA$^2$M \vs NB$^2$M, this ratio is given by,
\begin{equation}
    \label{eq:params_multi_class_paiwise}
    \frac{|\textbf{NA$^2$M}|}{|\textbf{NB$^2$M}|} = \frac{\frac{D(D-1)}{2} \cdot M + \frac{D(D-1)}{2} \cdot C}{N + \frac{D(D-1)}{2} \cdot B + \frac{D(D-1)}{2} \cdot C} = \frac{6401 + C}{\frac{125640}{D(D-1)} + 100 + C}.
\end{equation}
Figure~\ref{fig:params_multi_class} shows both ratios for different values of feature dimensionality $D$ and number of classes $C$. 

For unary features (NAM \vs NBM, Figure~\ref{fig:params_multi_class} left), the conclusion is the same as in the case of binary classification, \ie, for $D=10$, NBMs and NAMs have roughly equal number of parameters, for any given value of $C$.
For higher number of classes, NBMs still provide significant gain over NAMs, however, that gain starts decreasing, due to the fact that the final linear classifier starts becoming the most memory hungry part of the model.
Nevertheless, even with $C=$ 1,486 and $D=$ 278 in iNaturalist Birds, which has the most classes from the datasets we used, NBMs have around 5$\times$ less parameters than NAMs, see Table~\ref{tab:params_and_throughput}.

For pairwise feature interactions (NA$^2$M \vs NB$^2$M, Figure~\ref{fig:params_multi_class} right), the ratio is much more pronounced, \ie, already at $D=5$, NB$^2$Ms and NA$^2$Ms have equal number of parameters, and the growth of the ratio \wrt $D$ is much more significant.
Already after few hundred feature dimensions the ratio is at peak.

\paragraph{Throughput optimization of NAMs.}
Neural Additive Models (NAMs)~\cite{amf+21} learn an MLP network for each input feature, followed by a linear combination to make a prediction.
Official implementation~\cite{nam} runs a \texttt{for loop} over all networks, which results in a poor GPU utilization. 
More precisely, this implementation requires extremely large batches ($>>$1,024) per GPU to make the training efficient, which is impractical.
We do recognize that efficiency was not of highest priority to the authors~\cite{amf+21}, but in our case we are scaling GAMs to multi-class datasets with order of million data points. 
Thus, to facilitate a fair comparison against our NBMs, we reimplement NAMs using grouped convolutions~\cite{ksh12,xgd+17}, which are readily available in standard deep learning libraries.
Namely, we stack corresponding hidden layers of all MLPs (\eg, first hidden layer of all MLPs) into a grouped 1-D convolution, where number of groups equals the number of features.
The computation performed is identical to original NAMs, \ie there is no feature interaction, while achieving around $\times$2--$\times$10 speedup, depending on the dataset.
We perform the same implementation trick to NA$^2$Ms, as well.

\subsection{Additional visualization}
\label{sec:additional_visualization}

The interpretability of GAMs comes from the fact that the learned shape functions can be easily visualized.
In the same manner as the other GAM approaches, each feature's importance in an NBM can be represented by a unique shape function that \emph{exactly} describes how the NBM computes a prediction.
For an example, see Figure~\ref{fig:shape_functions} visualization on the CA Housing dataset.
We additionally demonstrate this on the CUB image classification dataset that consists of 200 bird classes, where each image is represented by interpretable features, \eg, a ``bird'' image can be represented with ``striped wings'', ``needle-shaped beak'', ``long legs'', \etc, that are predicted from the image using a convolutional-neural-network (CNN) model.
We present visualizations of shape functions with highest positive or negative contribution to 6 randomly selected bird classes, see Figures~\ref{fig:cub_shape_functions_1}~and~\ref{fig:cub_shape_functions_2}.

Towards this purpose, we train an ensemble of 20 models by running different random seeds with optimal hyper-parameters, in order to analyze when the models learn the same shape and when they diverge.
Following~\citet{amf+21}, we set the average score for each shape function to be zero by subtracting the respective mean feature score.
Next, we plot each shape function as $f_i(x_i)$ \vs $x_i$ for each model in the ensemble using a semi-transparent line, and an average ensemble shape function using a thick line.
Finally, the $x$-axis is divided by bars depicting the normalized data density, \ie, darker areas contain more data points.
Figures~\ref{fig:cub_shape_functions_1}~and~\ref{fig:cub_shape_functions_2} depict few image examples for the respective class (upper row), and an ensemble of NBMs (bottom row).
We visually observe that NBMs provide a strong interpretable overview of the respective bird class, and that the shape functions do not diverge significantly even in the cases where there are only few data points (light / white areas in the graphs).

\begin{figure}[!b]
    \centering
    \small{\textbf{Sooty Albatross}}
    \includegraphics[trim=0 0 0 0, width=1.0\textwidth]{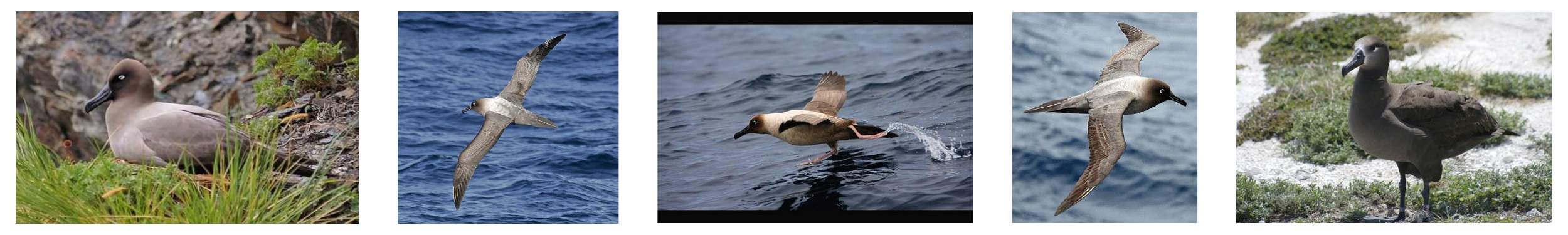}
    \includegraphics[trim=0 0 0 0, width=1.0\textwidth]{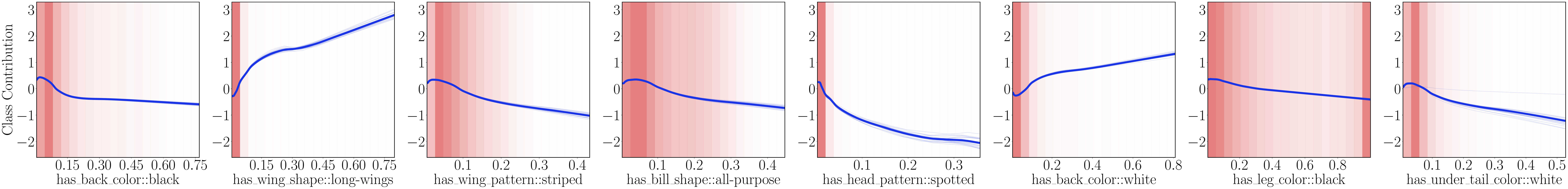}
    \vspace{1.0em}\\
    \small{\textbf{Crested Auklet}}
    \includegraphics[trim=0 0 0 0, width=1.0\textwidth]{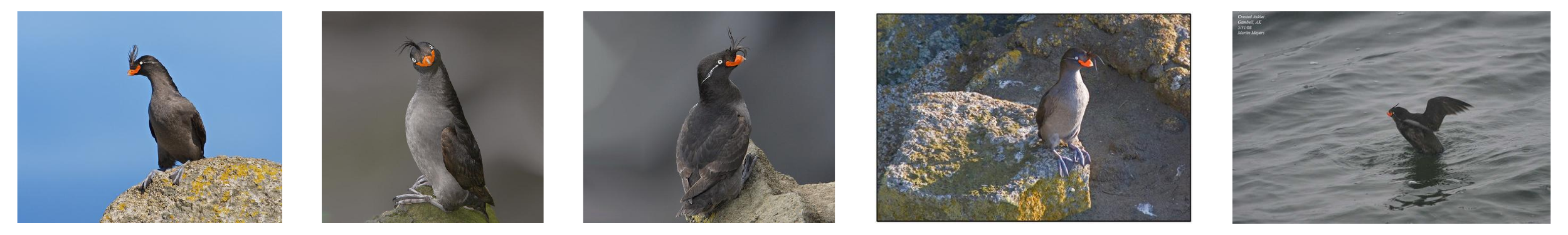}
    \includegraphics[trim=0 0 0 0, width=1.0\textwidth]{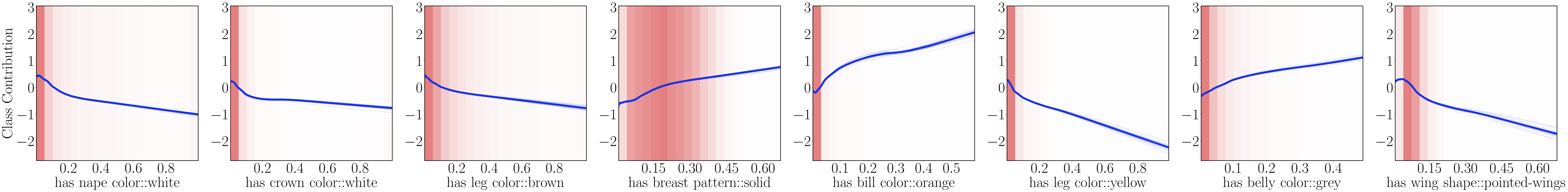}
    \caption{
        \label{fig:cub_shape_functions_1}
        CUB bird class image examples (upper row) and  NBM shape functions $f_i$ (bottom row) with highest positive or negative contribution to the respective bird class prediction.
    }
\end{figure}

\begin{figure}
    \centering
    \small{\textbf{Red-winged Blackbird}}
    \includegraphics[trim=0 0 0 0, width=1.0\textwidth]{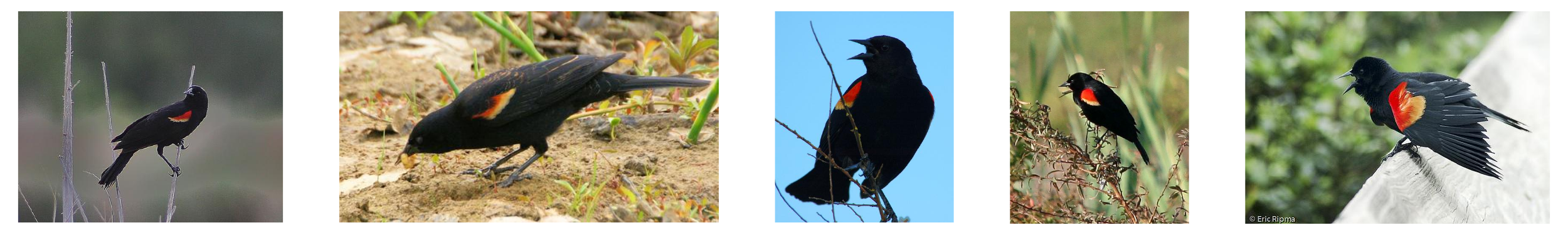}
    \includegraphics[trim=0 0 0 0, width=1.0\textwidth]{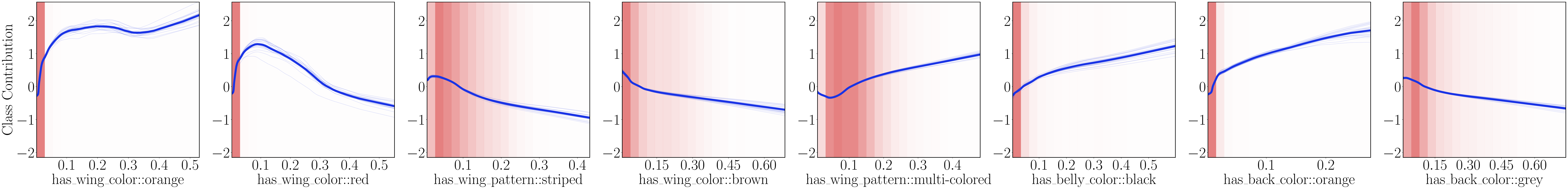}
    \vspace{1.0em}\\
    \small{\textbf{Yellow-headed Blackbird}}
    \includegraphics[trim=0 0 0 0, width=1.0\textwidth]{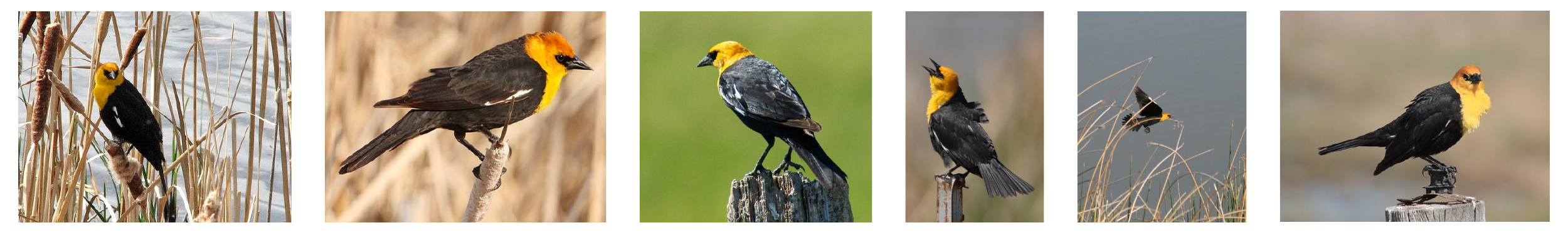}
    \includegraphics[trim=0 0 0 0, width=1.0\textwidth]{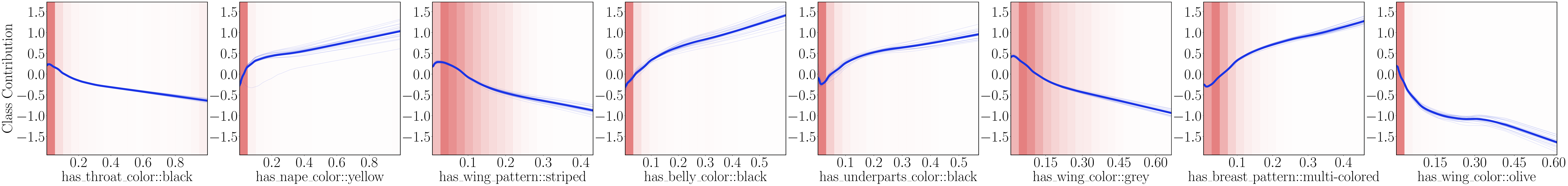}
    \vspace{1.0em}\\
    \small{\textbf{Eastern Towhee}}
    \includegraphics[trim=0 0 0 0, width=1.0\textwidth]{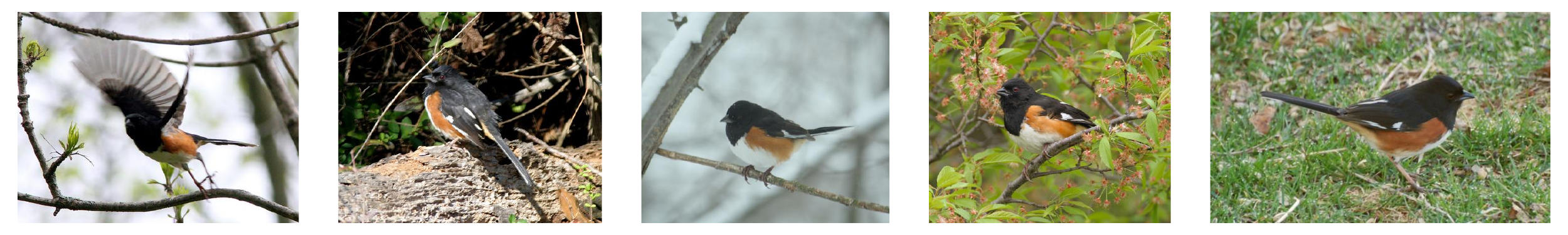}
    \includegraphics[trim=0 0 0 0, width=1.0\textwidth]{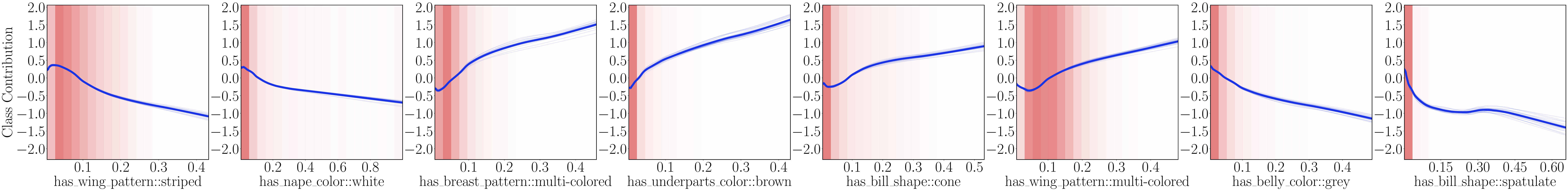}
    \vspace{1.0em}\\
    \small{\textbf{Brown Creeper}}
    \includegraphics[trim=0 0 0 0, width=1.0\textwidth]{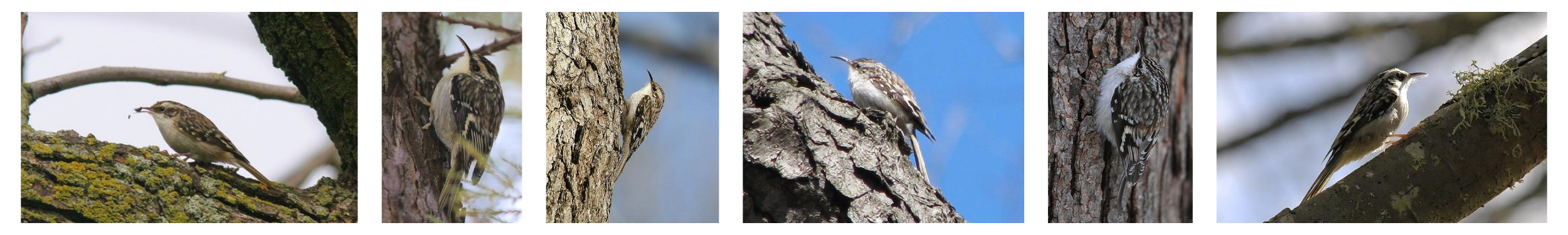}
    \includegraphics[trim=0 0 0 0, width=1.0\textwidth]{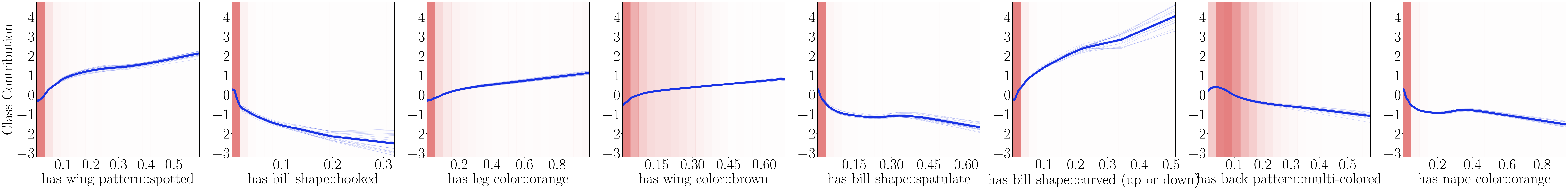}
    \caption{
        \label{fig:cub_shape_functions_2}
        CUB bird class image examples (upper row) and  NBM shape functions $f_i$ (bottom row) with highest positive or negative contribution to the respective bird class prediction.
    }
\end{figure}

\newpage
\subsection{Learning-Theoretic Guarantees for Basis Models in a RKHS}
\label{sec:appendix_full_proofs}

As discussed briefly in the main paper, it is possible to develop a more rigorous argument for the use of a small set of basis functions instead of a complete generalized additive model. To elucidate we first require establishing some notation: We represent matrices by uppercase boldface, \eg, $\bX$ and vectors by lowercase boldface, \ie, $\bx$. We assume that the covariates lie within the set $\cX \subseteq \bbR^D$, and the labels lie within the finite set $\cY$. Data $(\bx, y) \in \cX\times\cY$ are drawn following some unknown (but fixed) distribution $\fP$. We assume we are provided with $n$ i.i.d. samples $\{(\bx_i, y_i)\}_{i=1}^n$ as the train set.

Consider a generalized additive model (GAM) $g : \cX \rightarrow \cY$:
\begin{align*}
    g(\bx) = \sum_{i=1}^D w_i \cdot f_i(x_i).
\end{align*}
Assume that the shape functions $f_1, \dots, f_D; f_i : \bbR \rightarrow \cY$ have a maximum norm $B_\cH > 0$ in some Reproducing Kernel Hilbert Space (RKHS,~\cite{berlinet2011reproducing}) $\cH$ endowed with a PSD kernel $k(\cdot, \cdot) : \bbR \times \bbR \rightarrow \bbR$ and feature $\bphi : \bbR \rightarrow \bbR^{d_\cH}$, \ie, $\lVert f_i \rVert_\cH \leq B_\cH$, and $\bw = \{w_i\}_{i=1}^D \in \bbR^D$, $k(x, y) = \bphi(x)^\top\bphi(y)$ such that $\lVert \bw \rVert_2 \leq B_\bw$ for $B_\bw > 0$. This characterization corresponds to a family of functions $\cH_A$, \ie, 
\begin{equation}
\label{eqn:gam_hilbert}
    \cH_A = \{g\ |\ g(\bx) = \sum_{i=1}^D w_i f_i(x_i), \lVert f_i \rVert_\cH \leq B_\cH, \lVert \bw \rVert_2 \leq B_\bw\}
\end{equation}

The idea behind the basis decomposition approach highlighted in this paper is to only use a fixed number of bases, $B$, to model each $f_i$. Observe that one can obtain rigorous guarantees for $f_i$ that lie within an RKHS using Mercer's Theorem~\citep{mercer1909functions}. We have that if the kernel $k$ associated with the RKHS $\cH$ is continuous, positive-definite and symmetric, there exist a set of eigenvalues $\{\lambda_i\}_{i=1}^\infty$ and eigenfunctions (basis functions) $\{\bomega_i\}_{i=1}^\infty$ that form an orthonormal basis for $k$, \ie, for any $x, y \in \bbR$,
\begin{align}
    \label{eqn:mercer}
    k(x, y) = \sum_{i=1}^\infty \lambda_i \bomega_i(x) \bomega_i(y).
\end{align}
Where the bases are orthonormal, \ie, $\int_{x\in\bbR} \bomega_i(x)\bomega_j(x)dx = 0$ for $i\neq j$ and $1$ otherwise. This representation naturally gives a form for $\bphi(\cdot) = [\sqrt{\lambda_i}\bomega_i(\cdot)]_{i=1}^\infty$. Furthermore, we have that for each $f \in \cH$ there exists $\boldf \in L^2$ such that $f(x) = \langle \boldf, \bphi(x)\rangle_\cH \forall x \in \bbR$. Note once again that the reproducing kernel Hilbert space $\cH$ corresponds to the feature-wise functions $f$, whereas the space $\cH_A$ corresponds to the overall function $g$. Now, we can define, for the family $\cH_A$ a {\bf Generalized Basis Model} of order $B$ (denoted as $\cH_B$) as the following.
\begin{definition}
A Generalized Basis Model of order $B$ for any function class $\cH_A$ that satisfies the characterization in Equation~\ref{eqn:gam_hilbert} for some $(\cH, B_\cH, B_\bw)$ is given by the family $\cH_B$:
\begin{equation*}
    \cH_B =
    \left\{
    \begin{array}{c|c}
    &\ g(\bx) = \sum_{i=1}^D w_i f_i(x_i), \\
    g&f_i(\cdot) = \sum_{j=1}^B \beta_{ij}h_{j}(\cdot), \lVert f_i \rVert_\cH \leq B_\cH, \lVert \bw \rVert_2 \leq B_\bw, \\ 
    &h_i \in \cH, h_i \perp h_j \forall \ i \neq j.
    \end{array}
    \right\}
\end{equation*}
Where orthogonality $(\perp)$ is defined as $h_i \perp h_j \implies \int_{x\in\bbR} h_i(x)\cdot h_j(x) dx = 0$.
\end{definition}
Next, note that by Mercer's Theorem, for each function $f\in\cH$, there exists $\boldf = \{f_i\}_{i=1}^\infty, \boldf\in L^2$ such that $f(x) = \langle \boldf, \bphi(x)\rangle_\cH$. Combining this statement with the basis representation for $\bphi$ gives us an alternate representation of any $f\in\cH$, as 
\begin{align*}
    f(\cdot) = \sum_{i=1}^\infty f_i \sqrt{\lambda_i} \bomega_i(\cdot).
\end{align*}
Under this representation, we can relate the two spaces $\cH_A$ and $\cH_B$ as follows.
\begin{proposition}
For any $\cH$, dimensionality $D$, and number of basis functions $B > 0$, $\cH_B \subseteq \cH_A$. 
\end{proposition}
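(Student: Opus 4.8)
The plan is to observe that $\cH_B$ is defined by the same membership predicate as $\cH_A$ — namely, that $g$ admits an additive decomposition $g(\bx) = \sum_{i=1}^D w_i f_i(x_i)$ with feature-wise functions bounded in $\cH$ and weight vector bounded in $\ell_2$ — together with the \emph{additional} structural requirement that each $f_i$ lie in the $B$-dimensional subspace spanned by a fixed orthogonal family $\{h_1,\dots,h_B\}\subseteq\cH$. Since imposing extra constraints can only shrink a set, the containment should follow once we verify that the extra structure in $\cH_B$ does not push any admissible function outside of $\cH$.

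First I would fix an arbitrary $g \in \cH_B$ and unpack the definition: $g(\bx) = \sum_{i=1}^D w_i f_i(x_i)$ where $f_i(\cdot) = \sum_{j=1}^B \beta_{ij} h_j(\cdot)$, each $h_j \in \cH$, $\lVert f_i \rVert_\cH \le B_\cH$, and $\lVert \bw \rVert_2 \le B_\bw$. Next I would invoke the fact that an RKHS is a vector space: a finite linear combination of elements of $\cH$ is again an element of $\cH$, so $f_i \in \cH$ for every $i$. (If one prefers to argue in Mercer coordinates, each $h_j$ expands as $h_j(\cdot) = \sum_{k=1}^\infty (h_j)_k \sqrt{\lambda_k}\,\bomega_k(\cdot)$, so $f_i(\cdot) = \sum_{k=1}^\infty \big(\sum_{j=1}^B \beta_{ij}(h_j)_k\big)\sqrt{\lambda_k}\,\bomega_k(\cdot)$ is again of the admissible form, and its coefficient sequence is square-summable precisely because the definition of $\cH_B$ already imposes $\lVert f_i \rVert_\cH \le B_\cH < \infty$.)

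With $f_i \in \cH$, $\lVert f_i \rVert_\cH \le B_\cH$, and $\lVert \bw \rVert_2 \le B_\bw$ now established, $g$ satisfies exactly the three conditions appearing in the characterization of $\cH_A$ in Equation~\ref{eqn:gam_hilbert}, hence $g \in \cH_A$. Since $g$ was arbitrary, $\cH_B \subseteq \cH_A$, which is the claim.

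As for the main obstacle: there is essentially no analytic difficulty here — the statement is a definitional containment, and the only point requiring care is that the truncated expansion $\sum_{j=1}^B \beta_{ij} h_j$ remains inside $\cH$, which is immediate from closure of $\cH$ under finite linear combinations (equivalently, from the explicit norm bound $\lVert f_i \rVert_\cH \le B_\cH$ built into the definition of $\cH_B$). The substantive effort — quantifying the approximation error incurred by restricting $\cH_A$ to $\cH_B$, and showing that $B = \cO(\log D)$ bases suffice under a regularity assumption on $\fP$ — belongs to the companion results rather than to this proposition.
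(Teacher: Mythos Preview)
Your proof is correct and follows essentially the same approach as the paper: take an arbitrary $g\in\cH_B$, observe that each $f_i$ is a finite linear combination of elements of $\cH$ (hence lies in $\cH$), and note that the norm constraints $\lVert f_i\rVert_\cH\le B_\cH$ and $\lVert\bw\rVert_2\le B_\bw$ are already built into the definition of $\cH_B$, so the membership conditions for $\cH_A$ are satisfied. The paper phrases the closure step via Mercer's Theorem rather than vector-space closure, but this is the same idea you offer as your parenthetical alternative.
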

\begin{proof}
Follows from Mercer's Theorem~\citep{mercer1909functions}. Any $g \in \cH_B$ can be written as a linear combination of functions in $\cH$ (and consequently $\cH_A$), each of which admit a basis representation via Mercer's Theorem, where all but $B$ components have coefficient $0$. In the limit $B \rightarrow \infty$, $\cH_B = \cH_A$.
\end{proof}
Since the basis functions in $\cH_B$ lie on a finite-dimensional subspace within $\cH$ spanned by $B$ basis vectors, we can without loss of generality, assume that these $B$ basis vectors correspond to $\{\bomega_i\}_{i=1}^B$ obtained from Equation~\ref{eqn:mercer}. Now, to prove generalization bounds on the best function learnable in $\cH_B$ and contrast that with $\cH_A$, we require a ``smoothing'' assumption in $\{\bomega_i\}_{i=1}^\infty$ (and correspondingly on $\cH$). The essence of this assumption is to ensure that the kernel $\cH$ can be spanned without introducing much error by only with a few basis components, and is similar to smoothing kernel assumptions made in other areas as well, \eg, in reinforcement learning. 
\begin{assumption}[$\gamma$-Exponential Spectral Decay of $\cH$]
\label{assumption:singular_value_decay}
For the decomposition of $\cH$ as outlined in Equation~\ref{eqn:mercer}, we assume that there exist absolute constants $C_1 < 1$ and $C_2 = \cO(1)$ and parameter $\gamma$ such that $\lambda_i \leq C_1 \exp(-C_2\cdot i^\gamma)$ for each $i \geq 1$.
\end{assumption}
At a high level, our approach is to bound the {\em test error} of the {\em empirical risk minimizer} in $\cH_A$, with the optimal risk minimizer in $\cH_B$ to demonstrate that learning a generalized basis model does not incur significantly larger error compared to learning the full model. We first make these terms precise. Recall that the empirical risk for any function $g$ is given by $\widehat\cL_n(g) = \frac{1}{n}\sum_{i=1}^n \ell(g(\bx_i), y_i)$. We denote $\hat g$ as the empirical risk minimizer within $\cH_B$, \ie, 
\begin{align}
    \label{eqn:appendix_erm_nbm}
    \widehat g = \argmin_{g\in\cH_B} \widehat\cL_n(g).
\end{align}
Similarly, the {\em expected risk} can be given, for any function $g$ as $\cL = \bbE_{(\bx, y)\sim\fP}\left[\ell(g(\bx), y)\right]$. Then we can define the {\em optimal expected risk minimizer} $g_\star$ in $\cH_A$ as, 
\begin{align}
    \label{eqn:appendix_opt_nam}
    g_\star = \argmin_{g\in\cH_A} \cL(g).
\end{align}
We are now equipped to discuss our generalization bound.
\begin{theorem}
\label{thm:appendix_primary_generalization_bound}
Let $\ell$ be a 1-Lipschitz loss, $\delta \in (0, 1]$ and Assumption~\ref{assumption:singular_value_decay} hold with constants $C_1, C_2, \gamma$. Then we have that with probability at least $1-\delta$ there exist absolute constants $C_1, C_2$ such that,
\begin{align*}
    \cL(\hat g) - \cL(g_\star) \leq 2B_\bw\sqrt{\frac{B}{n}}  +  \frac{DC_2}{C_1}\exp(-B^\gamma) + 5\sqrt{\frac{\log\left(4/\delta\right)}{n}}.
\end{align*}
\end{theorem}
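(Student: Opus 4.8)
The plan is the standard approximation–estimation (bias–variance) split: compare $\hat g$ not with $g_\star$ directly but with a well-chosen intermediary $\tilde g\in\cH_B$ obtained by truncating the Mercer expansion of $g_\star$. Writing $g_\star=\sum_{i=1}^D w_i^\star f_i^\star$ with $\lVert f_i^\star\rVert_\cH\le B_\cH$, $\lVert\bw^\star\rVert_2\le B_\bw$, I would expand $f_i^\star=\sum_{j\ge 1}a_{ij}\bomega_j$ in the $L^2$-orthonormal Mercer basis of Equation~\ref{eqn:mercer}, keep the leading $B$ terms to form $\tilde f_i=\sum_{j=1}^B a_{ij}\bomega_j$, and set $\tilde g=\sum_i w_i^\star\tilde f_i$; since truncation does not increase the $\cH$-norm and leaves $\bw^\star$ untouched, $\tilde g\in\cH_B$. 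Then
\begin{align*}
\cL(\hat g)-\cL(g_\star)
&= \underbrace{\big(\cL(\hat g)-\widehat\cL_n(\hat g)\big)}_{(\mathrm i)}
 + \underbrace{\big(\widehat\cL_n(\hat g)-\widehat\cL_n(\tilde g)\big)}_{(\mathrm{ii})\,\le\,0}\\
&\quad + \underbrace{\big(\widehat\cL_n(\tilde g)-\cL(\tilde g)\big)}_{(\mathrm{iii})}
 + \underbrace{\big(\cL(\tilde g)-\cL(g_\star)\big)}_{(\mathrm{iv})},
\end{align*}
where $(\mathrm{ii})\le 0$ because $\hat g$ minimizes $\widehat\cL_n$ over $\cH_B\ni\tilde g$. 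It remains to bound $(\mathrm i)$ and $(\mathrm{iii})$ (estimation) and $(\mathrm{iv})$ (approximation).

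\emph{Estimation.} For $(\mathrm i)$ I would invoke the textbook Rademacher-complexity generalization bound: with probability $\ge 1-\delta/2$, $(\mathrm i)\le\sup_{g\in\cH_B}\big(\cL(g)-\widehat\cL_n(g)\big)\le 2\fR_n(\ell\circ\cH_B)+c\sqrt{\log(2/\delta)/n}$, and since $\ell$ is $1$-Lipschitz the Talagrand contraction lemma gives $\fR_n(\ell\circ\cH_B)\le\fR_n(\cH_B)$. For $(\mathrm{iii})$, $\tilde g$ is a \emph{fixed} function, so a plain Hoeffding bound on the bounded variable $\ell(\tilde g(\bx),y)$ gives $(\mathrm{iii})\le c'\sqrt{\log(2/\delta)/n}$ with probability $\ge 1-\delta/2$; a union bound closes both events at level $1-\delta$. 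The one genuinely model-specific ingredient is $\fR_n(\cH_B)=\cO(B_\bw\sqrt{B/n})$: using the reduction already noted (we may take $h_1,\dots,h_B=\bomega_1,\dots,\bomega_B$), every $g\in\cH_B$ is a linear functional of the $B$ coordinate feature maps $\bx\mapsto(\bomega_j(x_i))_{i\le D}$ whose coefficient vector has $\ell_2$-norm controlled through $\lVert\bw\rVert_2\le B_\bw$, and a covering-number (Dudley) estimate over a $B$-dimensional ball yields the $\sqrt{B/n}$ scaling; collecting numerical constants gives the $2B_\bw\sqrt{B/n}$ and $5\sqrt{\log(4/\delta)/n}$ terms.

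\emph{Approximation.} For $(\mathrm{iv})$, recall $\lVert f_i^\star\rVert_\cH^2=\sum_j a_{ij}^2/\lambda_j\le B_\cH^2$, so the truncation tail obeys
\begin{align*}
\lVert f_i^\star-\tilde f_i\rVert_{L^2}^2=\sum_{j>B}a_{ij}^2=\sum_{j>B}\lambda_j\cdot\frac{a_{ij}^2}{\lambda_j}\le\lambda_{B+1}B_\cH^2\le C_1 B_\cH^2\exp\!\big(-C_2(B+1)^\gamma\big),
\end{align*}
the last step being Assumption~\ref{assumption:singular_value_decay}. Because $\ell$ is $1$-Lipschitz and $\lVert\bw^\star\rVert_2\le B_\bw$, Cauchy–Schwarz over the $D$ features gives
\begin{align*}
\cL(\tilde g)-\cL(g_\star)\le\bbE_{\bx}\,\big|\tilde g(\bx)-g_\star(\bx)\big|\le B_\bw\Big(\sum_{i=1}^D\bbE\big[(\tilde f_i(x_i)-f_i^\star(x_i))^2\big]\Big)^{1/2}\le\cO\!\big(\sqrt{D}\,\exp(-\tfrac{C_2}{2}B^\gamma)\big),
\end{align*}
which, modulo the harmless renaming of constants in the statement, is the $\tfrac{DC_2}{C_1}\exp(-B^\gamma)$ term.

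\emph{Main obstacle.} I expect two steps to need the most care. The first is the passage from the Lebesgue-$L^2$ truncation error to the error measured under the marginal laws of the $x_i$ in the last display: this is precisely where the ``regularity of the data distribution'' mentioned in Section~\ref{sec:method} enters — e.g.\ assuming each marginal has a bounded density, or, cleaner, developing the Mercer machinery in $L^2$ of the marginal from the start. The second is pinning down $\fR_n(\cH_B)=\cO(B_\bw\sqrt{B/n})$ without picking up stray dependence on $D$, $B_\cH$, or $\mathrm{tr}(k)$; this needs the shared-basis structure to be exploited carefully and, if the eigenfunctions $\bomega_j$ are unbounded, a mild extra boundedness assumption on them. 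Everything else — symmetrization, contraction, Hoeffding/McDiarmid, and the final bookkeeping of $\delta$ and constants — is routine.
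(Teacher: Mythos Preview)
Your proposal is correct and follows essentially the same route as the paper: introduce the truncated surrogate $\tilde g\in\cH_B$, use $\widehat\cL_n(\hat g)\le\widehat\cL_n(\tilde g)$ for term (ii), bound (i) via the Bartlett--Mendelson Rademacher-complexity inequality plus Lipschitz contraction, bound (iii) by Hoeffding for the fixed function $\tilde g$, and control (iv) through Assumption~\ref{assumption:singular_value_decay}. The one noteworthy technical difference is in (iv): the paper works in sup-norm, bounding $\sup_\bx|\tilde g(\bx)-g_\star(\bx)|$ pointwise and then summing the tail $\sum_{j>B}$ directly via an integral comparison $\int_B^\infty C_1\exp(-C_2 j^\gamma)\,dj$, which produces the linear $D$ factor; you instead work in $L^2$, use the RKHS-norm constraint $\sum_j a_{ij}^2/\lambda_j\le B_\cH^2$ to bound the tail by $\lambda_{B+1}B_\cH^2$, and then Cauchy--Schwarz across the $D$ features, which gives a tighter $\sqrt{D}$ dependence. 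Both arrive at an $\exp(-cB^\gamma)$ approximation term, and your identification of the Lebesgue-vs-data-marginal measure mismatch as the delicate point is apt (the paper effectively sidesteps it by taking the sup over $\bx$ and implicitly treating the $\bomega_j$ as uniformly bounded).
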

{\em Proof}. We will denote the weights and singular values for $f_\star$ as $\bw^\star$ and $\lambda^\star_{ij}$, \ie, $g_\star(\bx) = \sum_{i=1}^D w^\star_i h^\star_i(x_i)$ where $h^\star_i(x) = \sum_{j=1}^\infty \lambda^\star_{ij}\bomega_j(x)$. Note that this represnetation exists for some $\lambda_{ij}^\star$ by Mercer's Theorem, as discussed earlier. For any $\cH_B, \cH_A$, consider the function $\tilde g \in \cH_B$ that is a truncated version of $g_\star$ up to $b$ bases, \ie, $\tilde g(\bx) = \sum_{i=1}^D w^\star_i \widetilde h_i(x_i)$ where $\widetilde h_i(x) = \sum_{j=1}^b \lambda^\star_{ij}\bomega_j(x)$. Clearly, $\tilde g\in\cH_B$. We can then rewrite the L.H.S. in the Theorem as,
\begin{align*}
   \cL(\hat g) - \cL(g_\star) &= \underbrace{\cL(\hat g) - \widehat\cL_n(\hat g)}_{\Circled{A}} + \underbrace{\widehat\cL_n(\hat g) - \widehat\cL_n(\tilde g)}_{\leq 0} + \underbrace{\widehat\cL_n(\tilde g) - \cL(g)}_{\Circled{B}}.
\end{align*}
Note that the middle term $\widehat\cL_n(\hat g) - \widehat\cL_n(\tilde g) \leq 0$ since $\hat g$ is the empirical risk minimizer in $\cH_B$. Hence, by bounding terms $\Circled{A}$ and $\Circled{B}$, the proof will be complete. We can bound $\Circled{B}$ via Lemma~\ref{lemma:appendix_term_b}. We have that with probability at least $1-\delta/2$ for any $\delta \in (0, 1]$, 
\begin{align*}
    \left|\widehat\cL_n(\tilde g) - \cL(g_\star)\right| \leq \frac{L\cdot C_1}{C_2}\exp(-B^\gamma) + 2\sqrt{\frac{\log\left(2/\delta\right)}{n}}.
\end{align*}

We bound $\Circled{A}$ via bounding the Rademacher complexity~\citep{wainwright2019high}. Since the loss function is Lipschitz and bounded, with probability at least $1-\delta/2, \delta \in (0, 1]$, we have that by Theorem 12 and Theorem 8 of~\citet{bartlett2002rademacher},
\begin{align}
    \cL(\hat g) - \widehat\cL_n(\hat g)\leq \fR_n(\ell \odot \cH_B) + \sqrt{\frac{8\log(4/\delta)}{n}}.
\end{align}
Where $\fR_n$ denotes the empirical Rademacher complexity at $n$ samples~\citep{bartlett2002rademacher}. Observe that each element of $\cH_B$ is a linear combination of $d$ elements that are represented by $b$ basis vectors in $\cH$. Hence, there exist weights $\{\{\alpha_{ij}\}_{i=1}^D\}_{j=1}^B$ such that any $f \in \cH_B$ can be written as $\sum_{i, j}\alpha_{ij}\bomega_j(x_i), \lVert \balpha\rVert_2 \leq B_\cH B_\bw$ where $\balpha = \{\{\alpha_{ij}\}_{i=1}^D\}_{j=1}^B$. Furthermore, we have that for any $x$ ,$\bphi(x)^\top\bphi(x) = \sum_{j=1}^B \bomega_j(x_i)^2 \leq B$. We therefore have, by Theorem 12 of~\cite{bartlett2002rademacher} that with probability at least $1-\delta/2, \delta \in (0, 1]$,
\begin{align*}
     \cL(\hat g) - \widehat\cL_n(\hat g) &\leq \fR_n(\ell \odot \cH_B) + \sqrt{\frac{8\log(4/\delta)}{n}} \\
     &\leq 2L\fR_n(\cH_B) + \sqrt{\frac{8\log(4/\delta)}{n}} \\
     &\leq 2LB_\bw\fR_n(\cH) + \sqrt{\frac{8\log(4/\delta)}{n}} \\
     &\leq 2LB_\bw\sqrt{\frac{B}{n}}+ \sqrt{\frac{8\log(4/\delta)}{n}} \\
\end{align*}
The last inequality follows from Lemma 22 of~\cite{bartlett2002rademacher}. Replacing the above result for $k$, we have that  with probability at least $1-\delta/2$,
Using the bound for term $\Circled{B}$ and applying a union bound provides us the final result.
\begin{lemma}
\label{lemma:appendix_term_b}
The following holds with probability at least $1-\delta, \delta \in (0, 1]$, for some absolute constant $C \ll 1$,
\begin{align*}
    \left|\widehat\cL_n(\tilde g) - \cL(g_\star)\right| \leq \frac{LD\cdot C_1}{C_2}\exp(-B^\gamma) + 2\sqrt{\frac{\log\left(2/\delta\right)}{n}}.
\end{align*}
\end{lemma}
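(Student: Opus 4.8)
The plan is to prove Lemma~\ref{lemma:appendix_term_b} by separating $\left|\widehat\cL_n(\tilde g) - \cL(g_\star)\right|$ into a statistical (concentration) term and a deterministic (truncation / approximation) term via the triangle inequality:
\[
\left|\widehat\cL_n(\tilde g) - \cL(g_\star)\right| \;\le\; \underbrace{\left|\widehat\cL_n(\tilde g) - \cL(\tilde g)\right|}_{\text{(i)}} \;+\; \underbrace{\left|\cL(\tilde g) - \cL(g_\star)\right|}_{\text{(ii)}}.
\]
The key point is that $\tilde g$ is a \emph{fixed}, non-random function: it is the $B$-term truncation of $g_\star$, and $g_\star$ depends only on $\fP$ and the class $\cH_A$, not on the $n$ training samples. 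Hence $\widehat\cL_n(\tilde g) = \frac1n\sum_{t=1}^n \ell(\tilde g(\bx_t), y_t)$ is an average of i.i.d.\ bounded terms with mean $\cL(\tilde g)$, and Hoeffding's inequality (boundedness of $\ell$ following from its Lipschitzness together with $g$ ranging over the bounded class $\cH_A$ on the bounded covariate set $\cX$) bounds term (i) by $2\sqrt{\log(2/\delta)/n}$ with probability at least $1-\delta$. Since term (ii) carries no randomness, the entire failure probability $\delta$ is allocated to this single concentration event; notably no uniform-convergence or Rademacher argument is needed here, which is exactly why $\tilde g$ is introduced as an intermediate comparator in the proof of Theorem~\ref{thm:appendix_primary_generalization_bound}.

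For term (ii) I would first use $L$-Lipschitzness of $\ell$ to write $\left|\cL(\tilde g) - \cL(g_\star)\right| \le L\,\bbE_{\bx\sim\fP}\big|\tilde g(\bx) - g_\star(\bx)\big|$, and then expand both functions in the Mercer eigenbasis of Equation~\ref{eqn:mercer}. With $g_\star(\bx) = \sum_{i=1}^D w_i^\star h_i^\star(x_i)$ and $h_i^\star = \sum_{j\ge 1}\lambda_{ij}^\star\bomega_j$, the truncated $\tilde g$ simply drops the tail $j>B$, so
\[
\tilde g(\bx) - g_\star(\bx) \;=\; -\sum_{i=1}^D w_i^\star \sum_{j>B}\lambda_{ij}^\star\,\bomega_j(x_i).
\]
Bounding this tail is the crux. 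Writing $\lambda_{ij}^\star\bomega_j = \big(\lambda_{ij}^\star/\sqrt{\lambda_j}\big)\cdot\big(\sqrt{\lambda_j}\,\bomega_j\big)$ and applying Cauchy--Schwarz, the per-feature contribution is at most $\big(\sum_{j>B}(\lambda_{ij}^\star)^2/\lambda_j\big)^{1/2}\big(\sum_{j>B}\lambda_j\bomega_j(x_i)^2\big)^{1/2}$; the first factor equals the RKHS norm of the tail of $h_i^\star$ and is therefore $\le B_\cH$, and the second factor is controlled by the eigenvalue tail $\sum_{j>B}\lambda_j$ using boundedness of the (orthonormal) eigenfunctions $\bomega_j$. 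Assumption~\ref{assumption:singular_value_decay} then gives $\sum_{j>B}\lambda_j \le C_1\sum_{j>B}\exp(-C_2 j^\gamma)$, which by comparison with $\int_B^\infty \exp(-C_2 t^\gamma)\,dt$ is $\lesssim \frac{C_1}{C_2}\exp(-C_2 B^\gamma)$. Finally, using $\|\bw^\star\|_2 \le B_\bw$ and summing over the $D$ coordinates (crudely, $\sum_i |w_i^\star| \le D\max_i|w_i^\star| \le D B_\bw$) yields the $\frac{LDC_1}{C_2}\exp(-B^\gamma)$ term after folding $B_\cH$, $B_\bw$ and the eigenfunction bound into the absolute constants. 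Adding this to the bound on (i) completes the proof.

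The step I expect to be the main obstacle is term (ii): making the book-keeping between the shape-function coefficients $\{\lambda_{ij}^\star\}$ and the kernel eigenvalues $\{\lambda_j\}$ precise enough that the constraint $\|h_i^\star\|_\cH \le B_\cH$ can genuinely be invoked, and ensuring some mild control on the eigenfunctions $\bomega_j$ (pointwise, or in $L^2(\fP)$) so that $\sum_{j>B}\lambda_j\bomega_j(x_i)^2$ is finite and dominated by $\sum_{j>B}\lambda_j$; this regularity is implicit in the setup and ought to be stated explicitly. The integral-comparison step that turns $\sum_{j>B}\exp(-C_2 j^\gamma)$ into $\exp(-B^\gamma)$ (with $C_2$ moving in and out of the exponent) is routine and is the source of the slightly loose constants in the stated bound.
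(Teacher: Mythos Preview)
Your high-level decomposition is exactly the paper's: triangle inequality into a concentration term $|\widehat\cL_n(\tilde g)-\cL(\tilde g)|$ and an approximation term $|\cL(\tilde g)-\cL(g_\star)|$, with the concentration term handled by Hoeffding on the fixed function $\tilde g$ and the approximation term reduced via Lipschitzness to $\sup_\bx |\tilde g(\bx)-g_\star(\bx)|$.

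The only substantive difference is in how the tail $\sum_{j>B}\lambda_{ij}^\star\bomega_j(x_i)$ is controlled. The paper simply bounds this by $\sum_{j>B}|\lambda_{ij}^\star|$ (implicitly taking $|\bomega_j|\le 1$) and then applies Assumption~\ref{assumption:singular_value_decay} \emph{directly to the shape-function coefficients} $\lambda_{ij}^\star$, as though the decay hypothesis were on those rather than on the kernel eigenvalues $\lambda_j$. Your Cauchy--Schwarz route, splitting into $\big(\sum_{j>B}(\lambda_{ij}^\star)^2/\lambda_j\big)^{1/2}\big(\sum_{j>B}\lambda_j\bomega_j(x_i)^2\big)^{1/2}$, is more careful: it keeps the distinction between the function coefficients and the kernel spectrum, invokes the RKHS-ball constraint $\|h_i^\star\|_\cH\le B_\cH$ where it genuinely applies, and uses Assumption~\ref{assumption:singular_value_decay} only on the $\lambda_j$ for which it is stated. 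The price is that you need to make explicit a uniform (or $L^2(\fP)$) bound on the eigenfunctions $\bomega_j$, which you correctly flag; the paper's argument tacitly needs the same thing. Both arrive at the same $\frac{LDC_1}{C_2}\exp(-B^\gamma)$ after the integral comparison, so the end result matches.
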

\begin{proof}
\begin{align*}
    \widehat\cL_n(\tilde g) - \cL(g_\star) &= \widehat\cL_n(\tilde g) - \cL(\tilde g) + \cL(\tilde g) - \cL(g_\star)\\
    &\leq \underbrace{\left|  \widehat\cL_n(\tilde g) - \cL(\tilde g) \right|}_{\Circled{1}} + \underbrace{\left|\cL(\tilde g) - \cL(g_\star) \right|}_{\Circled{2}}.
\end{align*}
To bound $\Circled{1}$, we have that for any $(\bx, y)$ within the training set, $\bbE[\ell(\tilde g(\bx), y)] = \cL(\tilde g)$ and $0 \leq \ell(\cdot, \cdot) \leq 1$. By Azuma-Hoeffding, we obtain with probability at least $1-\delta, \delta \in (0, 1]$,
\begin{align*}
    \left| \widehat\cL_n(\tilde g) - \cL(\tilde g)\right| \leq 2\sqrt{\frac{\log\left(2/\delta\right)}{n}}.
\end{align*}
For $\Circled{2}$, since $\ell$ is $L-$Lipschitz, we have for some $x_1, x_2, y \in \cY$,
\begin{align*}
    \left|\ell(x_1, y) - \ell(x_2, y)\right| &\leq \left|L\cdot\left|x_1 -y \right| - L\cdot\left|x_2 - y\right|\right| \\
    &\leq L\cdot\left|x_1 - x_2\right|.
\end{align*}
Therefore:
\begin{align*}
    \left| \cL(\tilde g) - \cL(g_\star)  \right| &\leq \left|\bbE_{(\bx, y)\sim\fP}\left[ \ell(\tilde g(\bx), y) - \ell(g_\star(\bx), y)\right] \right| \\
    &\leq \bbE_{(\bx, y)\sim\fP}\left[ \left|\ell(\tilde g(\bx), y) - \ell(g_\star(\bx), y)\right|\right] \\
    &\leq L\cdot\bbE_{(\bx, y)\sim\fP}\left[ \left|\tilde g(\bx) - g_\star(\bx)\right|\right] \\
    &\leq L\cdot\sup_{\bx\in\cX}\left|\tilde g(\bx) - g_\star(\bx)\right|.
\end{align*}
Observe now that for any $\bx\in\cX$,
\begin{align*}
    \left|\tilde g(\bx) - g_\star(\bx)\right| = \left|\sum_{i=1}^D\sum_{j=1}^\infty(\lambda^\star_{ij}-\widetilde\lambda_{ij} )\bomega_j(x_i) \right| \leq \sum_{i=1}^D\sum_{j=1}^B \left|(\lambda^\star_{ij}-\widetilde\lambda_{ij} )\right|\leq \sum_{i=1}^D\sum_{j=B+1}^{\infty} |\lambda^\star_{ij}|.
\end{align*}
Invoking Assumption~\ref{assumption:singular_value_decay}, we have that
\begin{align*}
    \sum_{i=1}^D\sum_{j=B+1}^{\infty} |\lambda^\star_{ij}| \leq D\sum_{j=B+1}^{\infty} C_1\exp(-C_2j^\gamma) \leq D\int_{j=B}^{\infty} C_1\exp(-C_2j^\gamma).
\end{align*}
Since $\gamma \geq 1$, we have,
\begin{align*}
    \int_{j=r_i}^{\infty} C_1\exp(-C_2j^\gamma) &\leq \frac{C_1}{C_2}\exp\left(-B^\gamma\right).
\end{align*}
A union bound for both parts finishes the proof.
\end{proof}

{\bf Discussion}. The result holds when the target function class is a member of a Reproducing Kernel Hilbert Space (RKHS). While RKHSes include a variety of expressive machine learning function classes, \eg, radial basis functions, polynomials, linear classifiers, it is not known whether arbitrarily initialized neural networks have a small norm in any RKHS with desirable properties. Most notably, however, it was shown recently that certain neural networks can be represented via the Neural Tangent Kernel (NTK), an example of where the theory can be applied as-is. More generally, however, this result demonstrates for arbitrary infinite-dimensional RKHS, we have an exponential dependence on the number of basis $B$ required in the approximation error (second term). Observe that if we set $B = \cO(\log D)$, the second term is $o(1)$ and goes to 0 as $n \rightarrow \infty$, which suggests that in practice, we only require a number of bases, $B$ that grows logarithmically with the dimensionality $D$.

\end{document}